\def\br{\boldsymbol {\rm r}}
\def\bx{{\boldsymbol {\rm x}}}
\def\I{\mathcal I}
\def\d{\,\,{\rm d}}
\def\br{\boldsymbol {\rm r}}
\def\T{{\rm T}}
\def\E{{\rm E}}
\def\rhom{\rho_{\max}}
\def\bt{{\boldsymbol {\rm q}}}
\def\bq{\bt}
\def\hatq{{\boldsymbol {\hat {\rm q}}}}
\def\hatQ{{\boldsymbol {\hat {\bQ}}}}
\def\bQ{{\boldsymbol{\mathscr{q}}}}
\def\A{{\boldsymbol {\rm A}}}
\def\P{{\boldsymbol {\rm P}}}
\def\I{{\boldsymbol {\rm I}}}
\def\tq{{\boldsymbol {\tilde {\rm q}}}}
\newtheorem{lemma}{Lemma}
\newtheorem{definition}{Definition}
\newtheorem{corollary}{Corollary}
\newdefinition{remark}{Remark}
\newproof{proof}{Proof}
\newproof{example}{Example}
\DeclareMathAlphabet{\mathcalligra}{T1}{calligra}{m}{n}
\DeclareFontShape{T1}{calligra}{m}{n}{<->s*[2.2]callig15}{}
\newcommand{\sr}{\mathcalligra{r}\,}
\newcommand{\dint}[2]{\int_{#1} \!\!\! \d \br \int_{#2} \!\!\! \d \bx \,\,}
\newcommand{\np}[1]{\mathcal N(\bx-\br) p_{#1}(\bx)}
\begin{document}

\author{Paul N.\ Patrone and Anthony J.\ Kearsley}
\ead{paul.patrone@nist.gov, anthony.kearsley@nist.gov}
\address{National Institute of Standards and Technology}

\date{\today}

\title{Inequalities for Optimization of Classification Algorithms: \\ A Perspective Motivated by Diagnostic Testing}

\begin{abstract}
Motivated by canonical problems in medical diagnostics, we propose and study properties of an objective function that uniformly bounds uncertainties in quantities of interest extracted from classifiers and related data analysis tools.  We begin by adopting a set-theoretic perspective to show how two main tasks in diagnostics -- classification and prevalence estimation -- can be recast in terms of a \textcolor{black}{variation on the confusion (or error) matrix $\P$ typically considered in supervised learning.}  We then combine arguments from conditional probability with the Gershgorin circle theorem to demonstrate that the largest Gershgorin radius $\rhom$ of the matrix $\I-\P$ (where $\I$ is the identity) yields uniform error bounds for both classification and prevalence estimation.  In a two-class setting, $\rhom$ is  minimized via a measure-theoretic ``water-leveling'' argument that optimizes an appropriately defined partition $U$ generating the matrix $\P$.  We also consider an example that illustrates the difficulty of generalizing the binary solution to a multi-class setting and deduce relevant properties of the confusion matrix.
% quantifying the extent to which class-conditional probability density functions (PDFs) of measurement outcomes coincide on sets $D_j$ that partition a measurement space $\Gamma$.  
\end{abstract}

\maketitle

\section{Introduction}

In early 2020, the emergence of the SARS-CoV-2 virus created a need for medical diagnostic tests that could monitor the spread of COVID-19.  In response, laboratories across the world developed hundreds of serology assays to detect immune markers of past infection \cite{assaynumber,EUA}.  Despite this global effort, tasks such as assay {\it optimization} were often addressed in a qualitative or even ad-hoc manner, and by the middle of 2020, the Food and Drug Administration (FDA) had already revoked the emergency use authorizations of several tests \cite{revoked}.  At the same time, there was growing awareness that {\it interpretation} of diagnostic results remained a challenging problem due to significant sources of uncertainty and unresolved questions of how to best estimate disease prevalence \cite{controversial,flawed}.  While the historical relationship between these  events may never be fully understood, their coincidence begs the question: how could methods for analyzing diagnostic data have been used to optimize the assays themselves?  

In the intervening years, it has become clear that new perspectives on the mathematics of classification theory are necessary to address such questions.  While the medical and public-health communities have always leveraged basic elements of probability \cite{Medstat1,Lewis12}, they have often remained tied to binary diagnostic settings \cite{Lew89,Lang14,Lewis12}, one-dimensional measurements \cite{OldPrevOpt}, and restrictive modeling choices \cite{OldPrevOpt,Lewis12}, typically with limited uncertainty quantification (UQ).  See also the introduction of Ref.\ \cite{Qiu19} for a related overview.  In contrast, we recently showed that deeper concepts arising from the intersection of measure theory, mathematical analysis, optimization, and metrology enable generalizations beyond the status quo \cite{Patrone21,Patrone22,Patrone22_1,Luke22,Luke23,Bedekar22,PartI}.  This new perspective led to several results that were, to the best of our knowledge, unknown to the public health, epidemiology, \textcolor{black}{and even machine-learning} (ML) communities: (i) prevalence, which often plays the role of a prior in generative classification, has unbiased and converging estimators in arbitrary dimension, and thus need not be modeled \cite{Luke22}; (ii) minimum-uncertainty prevalence estimates do not require classification \cite{Patrone21,Luke23,Bedekar22,PartI};\footnote{In the contexts motivating this manuscript, prevalence is usually understood to mean seroprevalence, or the fraction of individuals that express an immune response as quantified in terms of certain antibodies that attach to a virus (for example).  This is typically taken as a proxy for number of individuals that have been infected.} and (iii) discriminative and generative classifiers can be unified via a level-set theory that establishes an isomorphism between prevalence and the underlying conditional probability distributions \cite{PartI}.  Critically, these works pointed to the central, if not underappreciated, role of prevalence in classification.  At the same time, they only considered data analysis in the context of a fixed input space, but not the reverse question of how such tasks inform assay design, and more generally, selection of the input space itself.

The present manuscript aims to complete this virtuous circle by studying  uniform bounds on properties typically extracted from classifiers.  We begin by observing that both classification and prevalence estimation can be recast in terms of a \textcolor{black}{(non-standard)} confusion matrix $\P$ whose elements quantify the overlap between class-conditional probability density functions (PDFs) on the input space $\Gamma$.  We then demonstrate that uncertainty in these tasks is bounded from above in terms of the largest Gershgorin radius $\rho_{\max}$ of $\I-\P$, where $\I$ is the identity matrix.   In the binary setting, we also determine the matrix $\P$ corresponding to the smallest $\rhom$, which yields least upper bounds on assay uncertainty.  Interestingly, the solution to this problem arises from a ``water-leveling'' argument applied to the class-conditional PDFs.  This construction motivates an example that illustrates the challenges of minimizing $\rhom$ in multi-class settings, which we partially address through a pair of lemmas that deduce additional properties of $\P$.  Finally, we conclude by (i) examining further mathematical properties of $\rhom$ that make it a practical tool for selecting (or even optimizing) the input space $\Gamma$, and (ii) considering inexpensive numerical approximations of $\rhom$.

The practical motivation for studying {\it uniform bounds} such as $\rhom$ arises from the fact that assays (and more generally classifiers) are often developed under circumstances differing from the ones to which they are applied.  The COVID-19 pandemic provides important examples: many assays developed in 2020, when seroprevalence was low, suffered notable changes in accuracy as infections become more widespread \cite{change}.  This was likely a result of the facts that (i) heuristcs are often used to select the antigens (i.e.\ the input space) in diagnostic tests and (ii) data analyses thereof do not account for variable prevalence values.  Thus, uncertainty bounds defined in terms of quantities such as $\rhom$ serve at least two purposes.  First, they quantify the worst-possible performance of a Bayes-optimal classifer under any prevalence, which can be used as a criterion for deciding when further optimization of $\Gamma$ is needed.  Second, they quantify the accuracy with which prevalence can be estimated and subsequently used to construct a Bayes optimal classifier.  This latter consideration in particlar distinguishes our perspective from previous works.  Many authors have studied the utility of confusion matrices and nonlinear functions thereof as objective functions for training classifiers; see Refs.\ \cite{Metrics1,Metrics2,confusion,Imbalanced,Imbalanced2,Metrics3,Metrics4,Metrics5,Metrics6} and the citations therein.  However, the role of prevalence and estimation thereof in remains, to the best of our knowledge, a largely unstudied topic.

An overarching technical theme of our analysis is the fact that multiple layers of probabilistic conditioning separate the information that diagnosticians want -- the numbers and identities of samples in each class -- from the measurements that can be made.  In particular, the prevalence $q_j$ of the $j$th class is a discrete probability density that is, in practice, the unknown quantity to be estimated. Each $q_j$ is paired with a class-conditional PDF $p_j(\br)$, which quantifies the probability of an element in class $j$ yielding the random variable $\br\in \Gamma$.  Through the law of total probability, the $p_j(\br)$ ``extend'' $q_j$ onto a (typically) continuous space, which is then projected onto a new, discrete density $\bQ=\P\bq$ defined in terms of arbitrary indicator functions (where $\bq=(q_1,q_2,...,q_c)$ for $c$ classes).  The freedom to define these indicator functions yields multiple interpretations of the assay, the choice of which affects all downstream data analyses.  Thus, informally speaking, the main tasks in diagnostics are to (1) define a ``desirable'' projection operator $\P$ (i.e.\ confusion matrix) through the selection of $\Gamma$ and (2) reverse the flow of information, going from $\bQ$ to $\bq$.  However, as our goal is essentially to bound uncertainty in step (2) given (1), we must carefully navigate these layers of conditioning in order to quantify their relative impacts.  

A key limitation of our work is that we assume exact knowledge of the $p_j(\br)$.  This is for several reasons.  First, an analysis of this situation is sufficiently rich that it warrants a study of its own.  Second, we anticipate that a full analysis of the uncertainty induced by not knowing the $p_j(\br)$ is difficult enough so as to be beyond the scope of this manuscript.  Here our goal is to determine guaranteed best bounds on uncertainty in prevalence estimates, which correspond to an infinite-data limit.  While this is clearly not achievable in practice, many machine-learning settings are sufficiently rich with data that we anticipate the results presented herein are accurate approximations.  

Finally, we note that while the work herein applies more generally to classification theory, the original impetus arose from medical diagnostics.  Thus, we frequently use tasks and concepts from this field to motivate and explain our results.

The remainder of this manuscript is organized as follows.  Section \ref{sec:background} provides mathematical context and key assumptions behind our analysis.  Section \ref{sec:bounds} derives uniform bounds on classification error and prevalence estimation.  Section \ref{sec:minrho} provides methods for further optimizing these bounds in a binary classification setting.  Finally, Sec.\ \ref{sec:discussion} discusses the challenges of extending the binary result, considers limitations, and surveys open directions.  
\section{Background, Definitions, Assumptions, and Perspectives}
\label{sec:background}

\subsection{Mathematical Context: Classification Theory as Motivated by Diagnostics}

Diagnostic and classification problems assume the existence of a discrete population $\Omega$ whose members or ``individuals'' $\omega$ belong to one of $c$ distinct classes $C_k$ ($k\in \{1,2,...,c\}$) according to some probability distribution.    This setting is rigorously described in terms of an underlying probability space associated with the multinomial distribution \cite{multi,pspace}.  \textcolor{black}{For our purposes, it suffices to consider the prototypical case of a single sample point $\omega$.}  The corresponding probability space is constructed by considering: (i) the set of classes (i.e.\ outcomes) $\mathcal C=\{C_k\}$ for an individual  $\omega$; (ii) the induced power-set $2^{\mathcal C}$ of possible events; and (iii) the discrete probability $\bt=(q_1,q_2,...,q_c)$ of belonging to a class, which induces the measure $\mu_\bt$ associated with each event in $2^{\mathcal C}$.  Then the triple $(\mathcal C, 2^{\mathcal C},\mu_\bt)$ is the probability space associated with $\omega$.  In this context, $\bt$ is often an unknown quantity to be estimated, and it plays an elevated role contexts such as epidemiology.
\begin{definition}
Assume that a population has $c$ classes, and let $\bt=(q_1,q_2,...,q_c)$ be a discrete probability density, where $q_k$ is the probability of a sample from the population belonging to class $C_k$.  We refer to $\bt$ as the {\bf prevalence}.
\end{definition}
\begin{remark}
We distinguish the classes $C_k$ from their indices $k$.  While we could simply order the classes $k=1,2,...,c$, the symbol $C_k$ reminds us that each class has an associated label, such as positive or negative.
\end{remark}

In practice, we are not given the true class $C_k$ of an individual, but instead a random variable $\br(\omega)$ from which we infer this information.  The $\br(\omega)$ belong to a set or input space $\Gamma \subset \mathbb R^n$, where $n$ is a positive integer on the order of 20 or less in typical diagnostics problems; see Refs.\  \cite{dim1,dim2,dim3,dim4,dim5,dim6,dim7} for characteristic examples.  For the purposes of illustration, we may think of these vectors as being generated by an instrument that measures properties of an individual $\omega$.  Each dimension of $\br$ can be associated with a different measurement ``target,'' such as the quantity of a given antibody type in a blood sample.  Because the remainder of the manuscript focuses primarily on $\br$, we omit its dependence on $\omega$.  However, the reader should remain aware that $\br$ is a mapping from the population $\Omega$ to the measurement or input space $\Gamma$.    Note also that the random variable $\br(\omega)$ broadens the interpretation of $\omega$ to include not only the class of an individual, but also their ``antibody levels.''  Thus, the $\sigma$-algebra and probability measure must be generalized accordingly, although such details are not relevant and are omitted.  We always assume that the underlying probability space exists and is well defined.  \textcolor{black}{In light of this expanded notion of a sample point, it is useful to define a function $C(\omega)$ that ``projects out'' the class of $\omega$; i.e.\ $C(\omega)=C_k$ if and only if $C_k$ is the class associated with $\omega$.}

\begin{remark}
In practice, one is typically given a collection of independent sample points $\omega_i$ to classify and from which to estimate the prevalence.  We denote independent realizations of the corresponding $\br$ by $\br_i=\br(\omega_i)$.  Unless otherwise noted, indexing of $\br$ does not refer to its components.  In an abuse of terminology, we also refer to the class of $\br$, despite $\omega$ being the object classified. 
\end{remark}

Given a collection $\mathcal S=\{\br_i\}$ (which may have repeated elements if two samples are sufficiently similar) \cite{multiset},\footnote{Thus, $\mathcal S$ is a multiset.} diagnosticians seek to answer the following questions:
\begin{itemize}
\item[(I)] What is the best way to assign each $\br_i$ to a unique class $C_k$?  More precisely, how do we define a function $\hat C: \Gamma \to \mathcal C$ such that we maximize the probability of $\hat C(\br(\omega)) = C(\omega)$ being true?
\item[(II)] How do we estimate the true prevalence $\bt$ of $\Omega$ given $\mathcal S$?
\end{itemize}   
The present manuscript supplements these with a third question, stated here informally:
\begin{itemize}
\item[(III)] Among all available assays (i.e.\ input spaces $\Gamma$), which is the best at classifying samples and estimating prevalence?
\end{itemize}

Previous work has shown that in order to answer Questions (I) and (II), it is useful to postulate the existence of probability density functions (PDFs) $p_k(\br)$ conditioned on a sample belonging to class $C_k$ \cite{Patrone21,Patrone22,Patrone22_1,Luke23}.\footnote{In typical serology settings, we may always assume knowledge of the $p_k(\br)$.  Assays are validated by collecting training data, which can be used to construct the PDFs.}  Using the law of total probability, we combine the prevalence and these PDFs to construct \cite{pspace}
\begin{align}
\mathcal Q(\br) = \sum_{k=1}^c q_k p_k(\br),
\end{align}
which is the probability density associated with $\br(\omega)$, i.e.\ the measurement outcome from a sample taken at random from the population.  We also require a partition $U=\{D_j\}$ of $\Gamma$, where each $D_j$ is associated with class $C_j$.  How we define this association depends on whether question (I) or (II) is at hand.

For classification, the sets $D_j$ are used to directly interpret the $\br_i$.  That is, we assign $\br$ to class $C_j$ if $\br\in D_j$\textcolor{black}{, in which case $\hat C(\br) = C_j$}.  Note that this assignment is a choice, not an objective statement of the sample's underlying true class, which is assumed to be unknown.  To answer Question I, we construct the partition $U^\star=\{D_j^\star\}$ that minimizes an appropriate notion of classification error.  Following Refs.\ \cite{Patrone21,Luke22}, we express this error as
\begin{align}
\mathcal E = 1- \sum_{j=1}^c \int_{D_j}   q_j p_j(\br) \d \br,  \label{eq:error}
\end{align}
which is the probability of classifying a sample incorrectly.  It is straightforward to show that $\mathcal E$ is minimized when $\hat C(\br)=C_j$ for the value of $j$ that maximizes the product $q_jp_j(\br)$ \cite{Luke22,Zhang,RW}, a fact that we will leverage in Sec.\ \ref{sec:discussion}.

For prevalence estimation, the association of $D_j$'s to their corresponding classes is more subtle and has not been fully explored \cite{Luke22}.  Indeed, a theme of the present manuscript is to unravel this connection by showing that despite being distinct tasks, classification and prevalence estimation are controlled by the same spectral properties of a matrix $\P$ quantifying the overlap between the conditional PDFs.  This motivates the following definition; cf.\ also Ref.\ \cite{confusion}.

\begin{definition}
Let $U=\{D_j\}$ be a partition of the input space $\Gamma$ and $p_j(\br)$ be conditional probability densities that map $\br\in\Gamma$ to $\mathbb R$.  We refer to the matrix $\P$ having elements
\begin{align}
P_{j,k}=P_{j,k}(U)=\int_{D_j} p_k(\br) \d \br,
\end{align}
as the {\bf confusion matrix induced by} $U$.  
\end{definition}

\begin{remark}
We assume absolute continuity of measure \cite{Tao}, given that we postulate the existence of the $p_j(r)$.  In particular, whenever the Lebesgue measure of a set $D$ is zero, then we assume that $\int_{D} p_k(\br) \d \br=0$ for all $k$.   We need this property to ensure that, given a set $D$ with measure $\mu_D$, we can always find a subset $D'\subset D$ with an arbitrary measure $\mu_{D'}<\mu_D$.
\end{remark}

The matrix $\P$ plays a fundamental role in our analysis and simplifies several formulas.  For example, the classification error can be expressed as 
\begin{align}
\mathcal E = {\rm Tr}[(\I-\P){\rm diag}(\bq) ], \label{eq:traceerror}
\end{align}
where ${\rm diag}(\bq)$ is the diagonal matrix whose $(k,k)$th entry is $q_k$, and ${\rm Tr}$ is the trace operator; see, e.g.\ Ref.\ \cite{Luke23}.  Since the prevalence $\bt$ (which we take to be a column vector) is a probability density, the product $\P\bt$ is the ``push-forward'' density associated with a new random vector $\hatQ$ \cite{Tao}; viz. $\bQ=\P\bt$ is the probability of $\hatQ$,  the fraction of samples falling in each element of $U$.  Given a realization $\hatQ$ and assuming $\P$ is invertible, one can define a prevalence estimate $\hatq=\P^{-1}\hatQ$.  Several works have explored properties of this estimate, albeit expressed in a slightly different form \cite{Patrone21,Luke22,Luke23}.  For example, it is straightforward to show that $\hatq$ so defined is unbiased and converges in mean-square as the number of samples $s\to \infty$.  Here the focus is on estimating the weighted uncertainty
\begin{align}
\sigma^2(\A)=\E\left\{(\hatq - \bt)^\T \A (\hatq - \bt) \right\} \label{eq:vardef}
\end{align}       
where $\E$ denotes expectation and $\A$ is an arbitrary positive semi-definite weighting matrix.  In light of Eq.\ \eqref{eq:vardef}, Question (II) can be restated as the task of identifying a partition $U$ that minimizes $\sigma^2(\A)$.

It is known that the partition minimizing $\sigma^2(\A)$ is in general different than the one that optimizes classification accuracy \cite{Patrone22_1}, and both depend explicitly on the prevalence.  Moreover, an analytic solution to Question (II) is not known to the authors when the number of classes is greater than two.  This leads us to reinterpret Question (III) as follows:
\begin{itemize}
\item[(IIIa)] Are $\mathcal E$ and $\sigma^2(\A)$ uniformly bounded from above in $\bt$ by some norm of $\P$?
\item[(IIIb)] What is the partition $U^\star$ that minimizes this norm?
\end{itemize}
It is desirable that the bound associated with Question (IIIa) be sharp.  In doing so, one characterizes an assay by its worst possible performance, both with respect to classification and prevalence estimation.  

\subsection{Key Properties and Assumptions of the Matrix $\P$}

To proceed further, it is necessary to establish certain facts and assumptions about $\P$.  The following definition informs Question (IIIa).
\begin{definition}
Let $\P$ be a confusion matrix.  We refer to
\begin{align}
\rhom=\max_{j}\{1-P_{j,j}\}
\end{align}
as the {\bf largest column-wise Gershgorin radius}.  More generally, we refer to $\rho_j=1-P_{j,j}$ as a {\bf column-wise Gershgorin radius.}  
\end{definition}

\begin{remark}
In the remainder of this manuscript, we omit the modifier ``column-wise,'' since it is clear from context that these are the only Gershgorin radii that we consider.  See Ref.\ \cite{Gershgorin1,Gershgorin2} for a more complete discussion on Gershgorin circles and their radii.  
\end{remark}

Next we deduce useful properties of the overlap matrix and impose needed structure.  
\begin{itemize}
\item[(P1)] First observe that $\P$  is a left-stochastic matrix with non-negative entries \cite{StochasticMatrix1}.  
\item[(P2)] As a result, the columns of $\I-\P$ sum to zero, which implies
\begin{align}
[\I-\P]_{k,k}=\sum_{j:j\ne k} P_{j,k}.
\end{align}
\item[(P3)] Further assume $U$ can be (and has been) chosen so that $\P$ is strictly diagonally dominant column-wise.  That is
\begin{align}
P_{k,k} > \sum_{j:j\ne k} P_{j,k}.
\end{align}  This is equivalent to $P_{k,k}>1/2$.
\item[(P4)] By the Gershgorin circle theorem \cite{Gershgorin1,Gershgorin2}, (P3) implies that the real part of the eigenvalues of $\P$ are positive definite; thus $\P$ is invertible.  
\item[(P5)] Moreover, the spectral radius $\mathfrak r$ of $\I-\P$ is bounded as $\mathfrak r \le 2\rhom$ \cite{grad}.

\item[(P6)] Similarly, the smallest eigenvalue $\lambda_{\min}$ {\it in magnitude} of $\P$ is bounded from below as $1-2\rhom\le |\lambda_{\min}|$ \cite{Gershgorin1,Gershgorin2}.
\end{itemize}

\begin{figure}
\begin{center}\includegraphics[width=12cm]{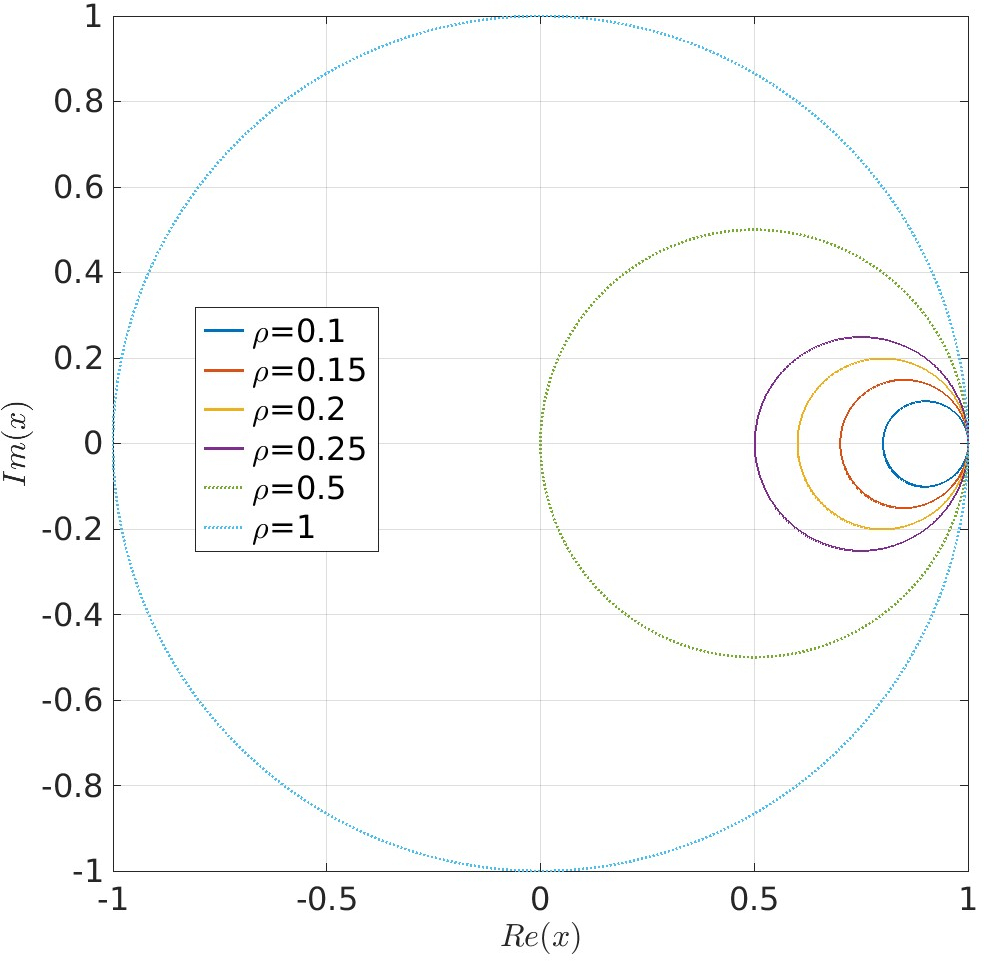}\end{center}\caption{Gershgorin circles associated with a matrix $\P$.  The Gershgorin radii $\rho$ are equal to the differences $1-P_{j,j}$ for all diagonal elements of $\P$.  Properties P1 and P2 imply that the circles are cotangent at the point $(1,0)$ in the complex plane.  Property P3 implies that the Gershgorin radii must satisfy $\rho < 0.5$.  In light of this, properties P4 -- P6 imply that the Gershgorin circles cannot touch or contain the origin.  For a viable assay, the corresponding $\P$ could not have the largest two Gershgorin circles as depicted above.  The case $\rho=0.5$ corresonds to a class for which a corresponding sample is classified incorrectly at least 50\% of the time, which is the hallmark of a bad diagnostic.  The case $\rho = 1$ corresponds to a class whose samples are always  mis-classified.      }\label{fig:gershgorins}
\end{figure}

In the following sections, we show how $\rhom$ yields bounds for both $\mathcal E$ and $\sigma^2(\A)$ and construct the partitions $U^\star$ that minimize $\rhom$ in certain cases.  However, several comments are in order.  

From the perspective of classification, (P3) implies that the probability of a sample from class $C_j$ falling into domain $D_j$ is greater than $1/2$; i.e.\ a sample will more likely than not be classified correctly.  As a practical matter, the optimal classification domains in Refs.\ \cite{Patrone21,Luke22} often yield a partition that satisfies this property; thus this assumption is reasonable.  We interpret the existence of a partition satisfying (P3) to be a {\it minimum necessary condition} (which must be checked!) for a diagnostic assay to be useful, since it implies that the majority of samples will be classified correctly under any circumstances.  

The radius $\rhom$ also has the desirable property that $\rhom \to 0$ as $\P\to \I$.  This limit is consistent with the concept of an ideal (``gold-standard'') diagnostic for which each element of $U^\star$ corresponds unambiguously to a unique class \cite{Ideal}.  Under this scenario, the optimal solutions to Questions (I) and (II) also coincide, and as will become clear in Sec.\ \ref{sec:bounds}, the diagnostic assay attains its minimum uncertainty bounds.  For now, we observe that the limit $\P \to \I$ implies that the supports of the $p_j(\br)$ at most overlap on a set of measure zero, in which case there exists a perfect (no-error) classification partition.

In light of (P3) to (P6), note also that the Gershgorin circles associated with the columns of $\I-\P$ are centered at $1-P_{k,k}$ and have a radius $1-P_{k,k}$.  Thus, they are all tangent at the origin, and we conclude that the largest such circle contains them all.  Moreover, the identity matrix in the difference $\I-\P$ yields a unit-shift in the eigenvalues of $\P$; thus we likewise conclude that Gershgorin circles of $\P$ are all contained in the largest and are tangent at the point $(1,0)$ in the complex plane.  See Fig.\ \ref{fig:gershgorins}.

\section{Bounds of Classification Error and Uncertainty in Prevalence}
\label{sec:bounds}

Given that the elements of $\bt$ are non-negative and sum to one, P3 implies that the classification error $\mathcal E$ is bounded by the matrix 1-norm 
\begin{align}
\mathcal E \le \frac{1}{2}|| \I-\P ||_1 = \frac{1}{2}\max_k \sum_{j} |[\I-\P]_{j,k}| = \rhom. \label{eq:ebound}
\end{align}
See Ref.\ \cite{confusion} for discussion and related bounds.  Moreover, the estimate given by inequality \eqref{eq:ebound} is sharp. To see this, let $\kappa$ be the value of $k$ associated with $\rhom$, and set $q_\kappa=1$, $q_j=0$ for $j\ne \kappa$.

For the purposes of estimating prevalence, assume a partition $U=\{D_j\}$ satisfying (P3) and a collection of measurements $\mathcal S=\{\br_i\}$ whose underlying samples are drawn from classes $C_j$ according to $\bt$.  Recall that $\mathcal S$ can have repeated elements. Construct a vector $\hatQ$ having elements
\begin{align}
\hat {\mathscr q}_j = \frac{1}{s}\sum_{i=1}^{s} \mathbb{I}(\br_i \in D_j)
\end{align}
where $\mathbb{I}$ is the indicator function and $s$ is the cardinality of $\mathcal S$ (treating repeated elements as unique).  By invertibility of $\P$, we construct the estimator
\begin{align}
\hatq = \P^{-1}\hatQ.
\end{align}
Clearly the expectation $\E[\hatq]=\bt$, since $\hatQ$ is a Monte Carlo estimate of $\bQ$ \cite{montecarlo}.  We first seek to bound the variance
\begin{align}
\sigma^2(\I)= \E[(\hatq - \bt)^{\T}(\hatq- \bt)] = \E[(\P^{-1}\hatQ - \bt)^{\T}(\P^{-1}\hatQ - \bt)],
\end{align}
in terms of $\rhom$.

\begin{lemma}[Variance Bound]\label{lem:var}
Assume a partition $U$ such that the $c \times c$ matrix $\P$ satisfies property (P3) for $c\ge 2$.  Let $\mathcal S$ denote a collection of points as defined previously, and let $s$ be the number of elements in $\mathcal S$, with repeated elements counting as distinct. Then $\sigma^2$ satisfies the inequality
\begin{align}
\sigma^2=\sigma^2(\I) \le \frac{2c\rhom - \frac{c^2}{c-1}\rhom^2}{s(1-2\rhom)^2} + \sum_{j=1}^c\frac{q_j(1-q_j)}{s}.  \label{eq:varinequality}
\end{align}
\end{lemma}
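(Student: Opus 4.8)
The plan is to reduce the inequality to a uniform bound on the Euclidean norms of the columns of $\P^{-1}$, which is where property (P3) enters decisively.

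First I would linearize the estimator. Since $\bQ=\P\bt$ and $\E[\hatQ]=\bQ$, invertibility of $\P$ gives $\hatq-\bt=\P^{-1}(\hatQ-\bQ)$, hence
\begin{align}
\sigma^2(\I)=\E\big[(\hatQ-\bQ)^{\T}\P^{-\T}\P^{-1}(\hatQ-\bQ)\big]={\rm Tr}\big(\P^{-\T}\P^{-1}\,{\rm Cov}(\hatQ)\big). \nonumber
\end{align}
The next step is to compute ${\rm Cov}(\hatQ)$ exactly: each of the $s$ samples lands independently in a unique $D_j$ with probability $\mathscr q_j=(\P\bt)_j$, so $\hatQ$ is $s^{-1}$ times a sum of $s$ i.i.d.\ one-hot categorical vectors and ${\rm Cov}(\hatQ)=s^{-1}\big({\rm diag}(\bQ)-\bQ\bQ^{\T}\big)$. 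Substituting this, using $\P^{-1}\bQ=\bt$ and noting that the $(j,j)$ entry of $\P^{-\T}\P^{-1}$ is the squared Euclidean norm $\|\P^{-1}e_j\|_2^2$ of the $j$th column of $\P^{-1}$, I obtain the identity
\begin{align}
\sigma^2(\I)=\frac1s\Big(\sum_{j=1}^c \mathscr q_j\,\|\P^{-1}e_j\|_2^2-\sum_{j=1}^c q_j^2\Big). \nonumber
\end{align}

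Because $\P$ is left-stochastic (P1), $\sum_j\mathscr q_j=1$, so the first sum is a convex combination of the column norms and is at most $\max_j\|\P^{-1}e_j\|_2^2$; and since $\sum_j q_j=1$ we have $1-\sum_j q_j^2=\sum_j q_j(1-q_j)$. Thus it suffices to show $\max_j\|\P^{-1}e_j\|_2^2\le 1+\big(2c\rhom-\tfrac{c^2}{c-1}\rhom^2\big)(1-2\rhom)^{-2}$. Here (P3) is exactly the right hypothesis: strict column diagonal dominance, with dominance gap $\min_j\big(P_{jj}-\sum_{i\ne j}P_{ij}\big)=\min_j(2P_{jj}-1)=1-2\rhom>0$, yields by the standard estimate for inverses of diagonally dominant matrices $\|\P^{-1}\|_1\le(1-2\rhom)^{-1}$; hence every column of $\P^{-1}$ has $\ell^1$-norm at most $(1-2\rhom)^{-1}$, so $\|\P^{-1}e_j\|_2^2\le\|\P^{-1}e_j\|_1^2\le(1-2\rhom)^{-2}$. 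Writing $(1-2\rhom)^{-2}=1+4\rhom(1-\rhom)(1-2\rhom)^{-2}$ and checking the elementary inequality $4\rhom(1-\rhom)\le 2c\rhom-\tfrac{c^2}{c-1}\rhom^2$ for $c\ge2$ and $0\le\rhom<\tfrac12$ (the difference factors as $(c-2)\rhom\big[2-\tfrac{c-2}{c-1}\rhom\big]\ge 0$) then yields exactly the stated bound. If a tighter constant were wanted one could instead exploit that every column of $\P^{-1}$ sums to $1$ (since $\mathbf 1^{\T}\P=\mathbf 1^{\T}$), which bounds its positive and negative parts separately, but this refinement is not needed.

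The covariance identity and the closing algebra are routine; the one step that needs care is the uniform estimate on $\|\P^{-1}e_j\|_2$. The crux is the observation that (P3) is precisely what makes a diagonal-dominance inverse bound available, and that left-stochasticity collapses the prevalence-weighted sum $\sum_j\mathscr q_j\|\P^{-1}e_j\|_2^2$ to its maximum without reintroducing $\bt$-dependence, so that the confusion-matrix contribution to the bound is genuinely uniform in the prevalence $\bt$.
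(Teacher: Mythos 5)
Your proof is correct, and it takes a genuinely different route from the paper's. The paper conditions on the vector $\tq$ of empirical class fractions and applies the law of total variance: the second term of the bound is the multinomial variance of $\tq$, while the first comes from bounding $\E\bigl[\|\hatQ-\P\tq\|_2^2\,\big|\,\tq\bigr]$, a Cauchy--Schwarz step on $\sum_i P_{i,j}(1-P_{i,j})$, and the operator-norm estimate $\|\P^{-1}\|_2^2\le c/(1-2\rhom)^2$ of Eq.\ \eqref{eq:l2bound}. You instead work unconditionally: since $\hatQ$ is an average of i.i.d.\ one-hot vectors, ${\rm Cov}(\hatQ)=s^{-1}\bigl({\rm diag}(\bQ)-\bQ\bQ^\T\bigr)$ exactly, which combined with $\P^{-1}\bQ=\bt$ gives the exact identity $\sigma^2=s^{-1}\bigl(\sum_j\mathscr{q}_j\|\P^{-1}\mathbf{e}_j\|_2^2-\sum_j q_j^2\bigr)$; the term $\sum_j q_j(1-q_j)/s$ then emerges algebraically rather than from conditioning, and left-stochasticity collapses the $\mathscr{q}_j$-weighted sum to a maximum, keeping the bound uniform in $\bt$. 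The only analytic input you need is the Varah-type column-diagonal-dominance bound $\|\P^{-1}\|_1\le(1-2\rhom)^{-1}$ -- the same element-wise estimate underlying Eq.\ \eqref{eq:l2bound} -- applied to individual columns of $\P^{-1}$ rather than to the $\ell^2$ operator norm, which avoids the norm-equivalence factor of $c$. As a result your intermediate bound, with numerator $4\rhom(1-\rhom)$ in place of $2c\rhom-\tfrac{c^2}{c-1}\rhom^2$, is at least as sharp as the stated one for every $c\ge 2$ (your factorization $(c-2)\rhom\bigl[2-\tfrac{c-2}{c-1}\rhom\bigr]\ge 0$ correctly justifies the relaxation, with equality at $c=2$). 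What the paper's conditioning buys is the interpretation of the two terms as inherent class-sampling variation versus excess uncertainty from mixing by $\P$; what your route buys is an exact variance formula, a dimension-free leading constant, and fewer inequalities.
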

\begin{remark}
We drop the dependence of $\sigma^2$ on $\I$ when clear from context.  
\end{remark}

\begin{proof} Decompose the difference $\hatq - \bt$ via conditional expectation as follows.  Let $\tq$ denote the random-vector with elements $\tilde q_j=m_j/s$, where $m_j$ is a random number of elements drawn from class $C_j$ according to a multinomial distribution with density $\bt$ and $s$ total events.  Clearly $\E[\tq]=\bt$.  Next express the difference
\begin{align}
\P^{-1}\hatQ - \bt = \P^{-1}(\hatQ - \P\tq) + (\tq - \bt).  \label{eq:decomp}
\end{align}
Observe that $\tq$ is the random vector associated with the fraction of samples in each class, whereas $\hatQ$ is the random vector associated with the fraction of samples in each domain of $U$.  
Taking the conditional expectation of Eq.\ \eqref{eq:decomp} yields
\begin{align}
\E[\P^{-1}\hatQ - \bt|\tq] = \tq - \bt. 
\end{align}
By the law of total variance (or conditional variance) \cite{Totvar}, we therefore recognize that
\begin{align}
\sigma^2 = \E\left[\E[(\P^{-1}\hatQ - \tq)^\T(\P^{-1}\hatQ - \tq)|\tq] \right] + \E[(\tq -\bt)^\T(\tq -\bt)]. \label{eq:totalvar}
\end{align} 
Known properties of the multinomial distribution yield \cite{multi}
\begin{align}
\E[(\tq -\bt)^\T(\tq -\bt)] = \sum_j \frac{q_j(1-q_j)}{s},
\end{align}
which is the second term on the right-hand side (RHS) of Eq.\ \eqref{eq:varinequality}.

Next consider that the vector 2-norm induces the corresponding matrix norm, yielding
\begin{align}
(\P^{-1}\hatQ - \tq)^\T(\P^{-1}\hatQ - \tq)=||\P^{-1}(\hatQ - \P\tq)||_2^2 \le ||\P^{-1}||_2^2 \,\, ||\hatQ - \P\tq||_2^2, \label{eq:conddecomp}
\end{align}
which implies that
\begin{align}
\E\left(||\P^{-1}(\hatQ - \P\tq)||_2^2 \big | \tq \right) \le ||\P^{-1}||_2^2 \,\, \E\left(||\hatQ - \P\tq||_2^2 \big | \tq\right) 
\end{align}
Writing the last term explicitly, one finds
\begin{align}
\E\left(||\hatQ - \P\tq||_2^2 \big | \tq\right) = \sum_{i,j} \frac{P_{i,j}(1-P_{i,j})\tilde q_j}{s}.
\end{align} 
Let $\rho_j = 1-P_{j,j}$.  One finds
 \begin{align}
 \sum_{i,j} \frac{P_{i,j}(1-P_{i,j})\tilde q_j}{s}& \le \max_{j}\left\{\sum_i P_{i,j}(1-P_{i,j}) \right\} \sum_k \frac{\tilde q_k}{s} = \frac{1}{s}\max_{j}\left\{\sum_i P_{i,j}(1-P_{i,j}) \right\} \nonumber \\
& = \frac{1}{s}\max_{j} \left\{\rho_j(1-\rho_j) + \sum_{i;i\ne j} P_{i,j} - P_{i,j}^2 \right\}  \nonumber \\
& = \frac{1}{s}\max_{j} \left\{\rho_j(2-\rho_j) - \sum_{i;i\ne j}   P_{i,j}^2 \right\} \nonumber \\
& \le \frac{1}{s} \max_j \left\{\rho_j(2-\rho_j) - \frac{1}{c-1} \left[\sum_{i:i\ne j}P_{i,j} \right]^2 \right\} \nonumber \\
&=  \frac{1}{s} \max_j \left\{\rho_j(2-\rho_j) - \frac{\rho_j^2}{c-1}  \right\} = \frac{2\rhom - \frac{c}{c-1}\rhom^2}{s}. \label{eq:long} \end{align}
The second and third lines come from property P2.  The fourth line is an application of the Cauchy-Schwarz inequality \cite{Krey}, and the fifth line leverages the pair of observations that $\rho_j \le \rhom < 0.5$ (properties P3 and P5; see also Fig.\ \ref{fig:gershgorins}) and $2\rho - \rho^2 - \frac{\rho^2}{c-1}$ is a monotone increasing function of $\rho$ for $0 < \rho < 0.5$ and $c\ge 2$.
Further recognize that,
\begin{align}
||\P^{-1}||_2^2 = \left[ \min_{\bx,||\bx||_2 = 1}|| \P \bx ||_2^2 \right]^{-1} \le c \left [  \min_{\bx,||\bx||_1 = 1}|| \P \bx ||_1^2 \right]^{-1} \le \frac{c}{(1-2\rhom)^2}, \label{eq:l2bound}
\end{align} 
where the last inequality arises by analyzing $\P$ element-wise.

Combining these equations one finds that
\begin{align}
\E\left(||\P^{-1}(\hatQ - \P\tq)||_2 \big | \tq \right) \le \frac{2c\rhom - \frac{c^2}{c-1}\rhom^2}{s(1-2\rhom)^2}.
\end{align} 
Since the conditional expectation is bounded by a deterministic quantity, we therefore arrive at inequality \eqref{eq:varinequality} via Eq.\ \eqref{eq:totalvar}. \qed

\end{proof}

\begin{corollary} The weighted variance $\sigma^2(A)$ is bounded by
\begin{align}
\sigma^2(\A) \le ||\A||_2^2 \sigma^2(\I),
\end{align}
where  $||\A||_2$ is the induced matrix 2-norm.
\end{corollary}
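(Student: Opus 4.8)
The plan is to reduce the weighted quadratic form to the unweighted one by a deterministic, realization-wise bound and then pass to expectations. The single ingredient needed is the elementary fact that for a symmetric positive semi-definite matrix $\A$, the induced matrix $2$-norm $||\A||_2$ coincides with the largest eigenvalue of $\A$, so that for every vector $v\in\mathbb R^c$ one has $v^\T\A v\le ||\A||_2\,||v||_2^2$; equivalently this follows from Cauchy--Schwarz together with the definition of the induced norm, $v^\T\A v\le ||v||_2\,||\A v||_2\le ||\A||_2\,||v||_2^2$. This is the same type of operator-norm estimate already used in the proof of Lemma~\ref{lem:var} (cf.\ Eq.~\eqref{eq:conddecomp}).

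Concretely, I would apply this inequality with $v=\hatq-\bt$, which is valid for every realization of the sample $\mathcal S$ (hence almost surely):
\begin{align}
(\hatq-\bt)^\T\A(\hatq-\bt)\le ||\A||_2\,(\hatq-\bt)^\T(\hatq-\bt). \nonumber
\end{align}
Both sides are nonnegative and the right side dominates the left for every outcome, so monotonicity and linearity of the expectation give
\begin{align}
\sigma^2(\A)=\E\!\left[(\hatq-\bt)^\T\A(\hatq-\bt)\right]\le ||\A||_2\,\E\!\left[(\hatq-\bt)^\T(\hatq-\bt)\right]=||\A||_2\,\sigma^2(\I). \nonumber
\end{align}
This already yields the assertion: since the normalization under which one typically applies the corollary has $||\A||_2\ge 1$, the bound $\sigma^2(\A)\le ||\A||_2^2\,\sigma^2(\I)$ follows from $||\A||_2\le ||\A||_2^2$; in general one may simply keep the sharper constant $||\A||_2$. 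Combining with Lemma~\ref{lem:var} then produces an explicit, $\bt$-uniform upper bound on $\sigma^2(\A)$.

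I do not expect a genuine obstacle here; the only point requiring care is the first step, which relies on $\A$ being symmetric (so that its spectral norm is its top eigenvalue and controls the quadratic form) and positive semi-definite (so that $\sigma^2(\A)$ is a bona fide variance and the inequality is between nonnegative quantities). Both are built into the definition of $\sigma^2(\A)$ in Eq.~\eqref{eq:vardef}, so once that hypothesis is invoked the argument is immediate.
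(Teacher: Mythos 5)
Your argument is correct and is essentially the intended one: the paper states this corollary without a separate proof, treating it as immediate from the same realization-wise operator-norm estimate already used in \eqref{eq:conddecomp}, namely bounding the quadratic form by the spectral norm and then taking expectations. Note, however, that what you actually establish is the sharper inequality $\sigma^2(\A) \le ||\A||_2\, \sigma^2(\I)$; the constant $||\A||_2^2$ in the stated corollary dominates yours only when $||\A||_2 \ge 1$ (for example, with $\A = \tfrac{1}{2}\I$ the squared version fails outright), so your closing remark is exactly the right caveat, and the first-power constant is the one that should be retained.
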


\begin{remark} Recall that $\P$ is both a push-forward operator \cite{Tao} and a stochastic matrix [in the sense of (P1)].  Its deviation from the identity characterizes the extent to which $\P$ ``redistributes'' probability mass when using $\bQ$ in lieu of $\bt$.  The $\rhom$ quantifies the maximum extent to which this mixing occurs.  This observation also explains why inequality \eqref{eq:varinequality} contains two terms, only one of which depends on $\rhom$.  The second term on the RHS quantifies the {\it inherent variation} due to the finite number of samples per class, which we can never eliminate except in the limit $s\to \infty$.  The first term on the RHS of \eqref{eq:varinequality} quantifies the excess uncertainty due to mixing of the probability density $\bt$.  This also clarifies the role of conditioning in Eq.\ \eqref{eq:conddecomp}, i.e.\ to separate these sources of uncertainty.  
\end{remark}
 
In practice, we can often make additional assumptions on the structure of $\P$ so as to sharpen inequality \eqref{eq:l2bound}.  The following corollary considers such a situation.
 
\begin{corollary}
Assume that $\P$ is symmetric.  Then we obtain the tighter bound
\begin{align}
\sigma^2 \le \frac{2\rhom - \frac{c}{c-1}\rhom^2}{s(1-2\rhom)^2} + \sum_j\frac{q_j(1-q_j)}{s}.  \label{eq:tightinequality}
\end{align} 
\end{corollary}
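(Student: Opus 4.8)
The plan is to re-run the proof of Lemma~\ref{lem:var} essentially verbatim, changing only the estimate of $\|\P^{-1}\|_2^2$ that appears in \eqref{eq:l2bound}: this is the sole place where the extra factor of $c$ enters the first term of \eqref{eq:varinequality}. The bound \eqref{eq:long} on $\E\big(\|\hatQ - \P\tq\|_2^2 \,\big|\, \tq\big)$ uses only properties (P2), (P3), (P5) and Cauchy--Schwarz, none of which involve symmetry, and it already produces the numerator $2\rhom - \tfrac{c}{c-1}\rhom^2$. So it suffices to show that when $\P$ is symmetric one may replace \eqref{eq:l2bound} by $\|\P^{-1}\|_2^2 \le (1-2\rhom)^{-2}$.

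To obtain this, first note that $\P$ is invertible by (P4), and if $\P$ is symmetric then so is $\P^{-1}$; the induced $2$-norm of a symmetric matrix equals its spectral radius, so $\|\P^{-1}\|_2 = \rho(\P^{-1}) = 1/|\lambda_{\min}|$, where $\lambda_{\min}$ denotes the eigenvalue of $\P$ smallest in magnitude. Property (P6) gives $1 - 2\rhom \le |\lambda_{\min}|$, and (P3) (equivalently $\rhom < 1/2$; see Fig.~\ref{fig:gershgorins}) guarantees $1 - 2\rhom > 0$, so squaring is order-preserving and $\|\P^{-1}\|_2^2 \le (1-2\rhom)^{-2}$.

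Substituting this improved bound into the chain $\E\big(\|\P^{-1}(\hatQ - \P\tq)\|_2^2 \,\big|\, \tq\big) \le \|\P^{-1}\|_2^2\,\E\big(\|\hatQ - \P\tq\|_2^2 \,\big|\, \tq\big)$, using \eqref{eq:long} for the second factor, and then applying the law of total variance \eqref{eq:totalvar} exactly as in the proof of Lemma~\ref{lem:var}, yields \eqref{eq:tightinequality}. There is no substantive obstacle here beyond bookkeeping; the only point needing care is to confirm that symmetry together with (P3)/(P4) legitimately upgrades the crude element-wise bound $\|\P^{-1}\|_2^2 \le c\,(1-2\rhom)^{-2}$ (valid for a general left-stochastic $\P$) to the sharp spectral statement — i.e.\ that the smallest singular value of a symmetric $\P$ coincides with $|\lambda_{\min}|$ rather than merely being bounded by it.
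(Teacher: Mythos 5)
Your proposal is correct and matches the paper's own proof: the argument of Lemma \ref{lem:var} is reused up through \eqref{eq:long}, and the only change is replacing \eqref{eq:l2bound} by the spectral bound $\|\P^{-1}\|_2^2 = 1/\mathscr{s}_{\min}^2 = 1/|\lambda_{\min}|^2 \le (1-2\rhom)^{-2}$, valid since symmetry makes the smallest singular value of $\P$ equal to $|\lambda_{\min}|$ and the Gershgorin estimate (P6) gives $|\lambda_{\min}| \ge 1-2\rhom > 0$. Your added remark that the crude element-wise factor of $c$ was the sole source of looseness is exactly the point the paper is making.
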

 
\begin{remark} Inequality \eqref{eq:tightinequality} differs from \eqref{eq:varinequality} in that the former removes a factor of dimension $c$ from the bound on variance.  See Sec.\ \ref{sec:minrho} for applications of this result.  
\end{remark}

\begin{proof}  Proceed exactly as in Lemma \ref{lem:var} up through inequalities \eqref{eq:long}.  In place of inequality \eqref{eq:l2bound}, recognize that
\begin{align}
||\P^{-1}||_2^2 = \frac{1}{\mathscr{s}_{\min}^2} = \frac{1}{|\lambda_{\min}|^2} \le \frac{1}{(1-2\rhom)^2},
\end{align}
where $\mathscr{s}_{\min}$ is the smallest singular value of $\P$, and $\lambda_{\min}$ is the eigenvalue with the smallest magnitude (see P5).  \qed 
\end{proof} 
 
\begin{remark}
In light of Lemma \ref{lem:var}, it is useful to define the following auxiliary quantities
\begin{subequations}
\begin{align}
&\varepsilon_{\rhom}=\frac{2c\rhom-\frac{c^2}{c-1}\rhom^2}{s(1-2\rhom)^2}, &  &\epsilon_{\rhom} = \frac{\varepsilon_{\rhom}}{c}, \\
&\varepsilon_{\sigma} = \varepsilon_{\rhom} + \sum_{j=1}^c \frac{q_j(1-q_j)}{s}, &  &\epsilon_{\sigma} = \epsilon_{\rhom} + \sum_{j=1}^c \frac{q_j(1-q_j)}{s},
\end{align}
\end{subequations}
where terms denoted by $\varepsilon$ and $\epsilon$ correspond to inequalities \eqref{eq:varinequality} and \eqref{eq:tightinequality}, respectively.  Note that $\varepsilon_{\rhom}$ and $\epsilon_{\rhom}$ are the excess uncertainty in prevalence due to overlap of the conditional probability densities. \label{rem:useful}
\end{remark} 
 
%{\bf Remark:} It is interesting that in the case $d=2$, the numerator of the first term in inequality \eqref{eq:varinequality} reduces to $2\rhom - 2\rhom^2 = 2\rhom(1-\rhom)$.  This is reminiscent of the variance of the binomial distribution and mirrors the structure of the second term of the inequality.  
 
\section{Minimizing $\rhom$} 
\label{sec:minrho} 
 
When $\bt$ has more than two dimensions, it is not clear that there exist analytical formulas for the optimal partition $U^\star$ that minimizes $\rhom$.   However, the two-dimensional case is both extremely common in diagnostics (and thus useful) and analytically tractable; see, e.g.\ \cite{EUA}.  Therefore, consider the following problem:
\begin{align}
U^\star = {\rm \arg}\!\min_{U} [\rhom(U)]. \label{eq:opt_partition}
\end{align} 
 
The solution to Eq.\ \eqref{eq:opt_partition} can be constructed via a water-leveling algorithm that bears resemblance to the both the bathtub principle in measure theory \cite{bathtub} and methods for optimal classification \cite{Patrone22}.  The following definition establishes the latter connection.
\begin{definition}[Optimal Binary Classification Domains]
Let $p_1(r)$ and $p_2(r)$ be the conditional PDFs associated with classes $C_1$ and $C_2$.  Let $0 \le t < \infty$.  Then the sets 
\begin{subequations}
\begin{align}
\mathcal D_1(t) &= \{\br: p_1(\br)  > t p_2(\br)\} \cup \mathcal B_1, \label{eq:D1}\\
\mathcal D_2(t) &= \{\br: p_1(\br)  < t p_2(\br)\} \cup \mathcal B_2, \label{eq:D2}\\
\mathcal D_b(t) &= \{\br: p_1(\br) = t p_2(\br)\} =  \mathcal B_1 \cup  \mathcal B_2, \label{eq:Db}
\end{align}
\end{subequations}
are the {\bf optimal binary classification domains} associated with prevalence ${q=q_1=(1+t)^{-1}}$, where $\mathcal  B_1$ and $\mathcal  B_2$ are arbitrary partitions of the boundary set $\mathcal D_b$.  
\end{definition}
\begin{remark}
It is straightforward to show that Eqs.\ \eqref{eq:D1} and \eqref{eq:D2} minimize the error \eqref{eq:error} in the binary classification setting.  See, e.g.\ Refs.\ \cite{Patrone21,RW}.
\end{remark}
Construction of $U^\star$ also makes use of the corresponding measures
\begin{align}
\mu_j(t) &= \int_{\mathcal D_j(t)}p_j(\br) \d \br, & \mu_{b,j}(t)= \int_{\mathcal B_j(t)} p_j(\br) \d \br,
\end{align}
for $j=1,2$.  The optimal partition $U^\star=\{D_1^\star,D_2^\star\}$ is then given by the point at which the difference $\Delta(t) = \mu_1(t) - \mu_2(t)$ crosses zero.  More precisely:
\begin{lemma}[Water-Leveling Construction]\label{lem:water}
Assume that the function $\Delta(t)$ attains the value $\Delta(t^\star)=0$ for some $t^\star$ on $0 < t < \infty$ and $\mathcal D_b(t^\star)$ has zero measure with respect to both $p_1(\br)$ and $p_2(\br)$.  Then the partition  
\begin{subequations}
\begin{align}
D_1^\star &= \mathcal D_1(t^\star) \cup \mathcal B_1(t^\star) \label{eq:D1star} \\
D_2^\star &= \mathcal D_2(t^\star) \cup \mathcal B_2(t^\star) \label{eq:D2star}\\
\mathcal D_b(t^\star) &= \mathcal B_1(t^\star) \cup \mathcal B_2(t^\star)     \label{eq:Dbstar}
\end{align}
\end{subequations}
minimizes $\rhom$, where $\mathcal B_1(t^\star)$ and $\mathcal B_2(t^\star)$ are an arbitrary partition of the set $\mathcal D_b(t^\star)$.  Alternatively, if (i) $\mathcal D_b(t^\star)$ has positive measure with respect to any distribution or (ii) $\Delta(t)$ is discontinuous at $t^\star$ and $\pm \Delta(t^\star \pm \epsilon) < 0$ for any positive $\epsilon$, Eqs.\ \eqref{eq:D1star}--\eqref{eq:Dbstar} minimize $\rhom$ subject to the additional constraint on $\mathcal B_1(t^\star)$ and $\mathcal B_2(t^\star)$ that 
\begin{equation}
\mu_1(t^\star) + \mu_{b,1}(t^\star)-\mu_2(t^\star) - \mu_{b,2}(t^\star) = 0.
\end{equation}
\end{lemma}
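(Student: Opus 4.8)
The plan is to reduce \eqref{eq:opt_partition} to the problem of maximizing $\min\{\mu_1,\mu_2\}$ over all partitions, and then to identify the threshold test at $t^\star$ as the maximizer via a Neyman--Pearson-type comparison. First I would note that for a two-class partition $U=\{D_1,D_2\}$ one has $P_{j,j}(U)=\int_{D_j}p_j\d \br$, so $\rho_j=1-P_{j,j}(U)$ and $\rhom(U)=1-\min\{P_{1,1}(U),P_{2,2}(U)\}$; hence \eqref{eq:opt_partition} is equivalent to maximizing $\min\{\mu_1(U),\mu_2(U)\}$, where $\mu_1(U)=\int_{D_1}p_1\d \br$ and, using $\int_\Gamma p_2\d \br=1$ and that $D_1,D_2$ partition $\Gamma$, $\mu_2(U)=\int_{D_2}p_2\d \br=1-\int_{D_1}p_2\d \br$. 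Thus a partition is parametrized by the single set $D_1$. I would also record that $\mu_1(t)$ is non-increasing and $\mu_2(t)$ is non-decreasing in $t$, so $\Delta$ is non-increasing and its one-sided limits $\Delta(t^{\star+})$ and $\Delta(t^{\star-})$ exist.

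Write $A=\{\br:p_1(\br)>t^\star p_2(\br)\}$, $B=\{\br:p_1(\br)<t^\star p_2(\br)\}$, and $C=\mathcal D_b(t^\star)$. The key estimate is that $p_1-t^\star p_2$ is positive exactly on $A$, vanishes on $C$, and is negative on $B$, so for every measurable $D_1$,
\[
\int_{D_1}\bigl(p_1-t^\star p_2\bigr)\d \br\le\int_{A}\bigl(p_1-t^\star p_2\bigr)\d \br ,
\]
with equality precisely when $A\subseteq D_1\subseteq A\cup C$ up to null sets. Substituting $\int_{D_1}p_2\d \br=1-\mu_2(U)$ and rearranging gives, for every partition $U$,
\[
\mu_1(U)+t^\star\mu_2(U)\le M^\star:=\int_A p_1\d \br+t^\star\Bigl(1-\int_A p_2\d \br\Bigr),
\]
and equality holds for every ``threshold-$t^\star$'' partition $D_1=A\cup\mathcal B_1$, $D_2=B\cup\mathcal B_2$ with $\{\mathcal B_1,\mathcal B_2\}$ a partition of $C$ — the $\mathcal B_1$-dependence cancels because $p_1=t^\star p_2$ on $C$. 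Since $t^\star>0$, the bound $\mu_1+t^\star\mu_2\le M^\star$ forces $\min\{\mu_1,\mu_2\}\le M^\star/(1+t^\star)$ for \emph{every} partition, with equality only when $\mu_1=\mu_2=M^\star/(1+t^\star)$. So it remains to produce a threshold-$t^\star$ partition that is balanced, i.e.\ with $\mu_1=\mu_2$.

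To do this I would sweep $\mathcal B_1$ over measurable subsets of $C$. Using $p_1=t^\star p_2$ on $C$ again, for a threshold-$t^\star$ partition $\mu_1-\mu_2=\int_A p_1\d \br-\int_B p_2\d \br+(1+t^\star)\int_{\mathcal B_1}p_2\d \br-\int_C p_2\d \br$. By the absolute-continuity assumption (the Remark following the definition of the confusion matrix), $S\mapsto\int_S p_2\d \br$ is non-atomic on $C$, so $\int_{\mathcal B_1}p_2\d \br$ sweeps the whole interval $[0,\int_C p_2\d \br]$; equivalently $\mu_1-\mu_2$ sweeps $[\Delta(t^{\star+}),\Delta(t^{\star-})]$, since letting $t\downarrow t^\star$ (resp.\ $t\uparrow t^\star$) corresponds to the split $\mathcal B_1=\emptyset$ (resp.\ $\mathcal B_1=C$). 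In the generic case $C$ is null for both $p_1$ and $p_2$, so $\Delta$ is continuous at $t^\star$, this interval collapses to the single value $\Delta(t^\star)=0$, and \emph{any} split $\{\mathcal B_1,\mathcal B_2\}$ is balanced, as asserted. In the degenerate cases ($C$ of positive measure, hence a downward jump of $\Delta$ across $0$), the stated sign conditions give $\Delta(t^{\star+})\le0\le\Delta(t^{\star-})$, so $0$ lies in the interval and there is a split satisfying $\mu_1(t^\star)+\mu_{b,1}(t^\star)-\mu_2(t^\star)-\mu_{b,2}(t^\star)=0$. Either way the resulting partition attains $\min\{\mu_1,\mu_2\}=M^\star/(1+t^\star)$, the global maximum, hence minimizes $\rhom$.

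I expect the main obstacle to be the degenerate branch: correctly identifying the one-sided limits $\Delta(t^{\star\pm})$ with the endpoints of the achievable range of $\mu_1-\mu_2$, verifying that ``$\Delta$ crosses zero at $t^\star$'' in the two stated senses is exactly the condition $0\in[\Delta(t^{\star+}),\Delta(t^{\star-})]$, and invoking non-atomicity (or the paper's ``subset of arbitrary measure'' remark together with a nested-set/intermediate-value argument) to realize the balancing split. The Neyman--Pearson inequality, the cancellation giving $\mu_1+t^\star\mu_2=M^\star$ on the threshold family, and the arithmetic that a linear constraint with positive coefficients caps $\min\{\mu_1,\mu_2\}$ are all routine once set up.
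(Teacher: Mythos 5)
Your proof is correct, but it takes a genuinely different route from the paper's. The paper argues locally, by perturbation: starting from the threshold partition at $t^\star$ (which is balanced, $\mu_1(t^\star)=\mu_2(t^\star)$, since $\Delta(t^\star)=0$), it shows that transferring or exchanging any positive-measure sets $\delta_1\subset\mathcal D_1(t^\star)$, $\delta_2\subset\mathcal D_2(t^\star)$ forces $\bar\mu_1-\mu_1(t^\star) < -t^\star[\bar\mu_2-\mu_2(t^\star)]$, so at least one diagonal entry strictly decreases and $\rhom$ increases; the positive-measure and jump cases are then handled by choosing $\mathcal B_1,\mathcal B_2$ to restore the balance. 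You instead give a global certificate: the Neyman--Pearson-type bound $\mu_1(U)+t^\star\mu_2(U)\le M^\star$ for \emph{every} partition, with equality on the threshold-$t^\star$ family, combined with the elementary inequality $(1+t^\star)\min\{\mu_1,\mu_2\}\le\mu_1+t^\star\mu_2$, so that any balanced threshold partition attains the global maximum of $\min\{\mu_1,\mu_2\}=1-\rhom$. Both proofs pivot on the same water-leveling identity $\mu_1=\mu_2$ at the optimum and both invoke the paper's absolute-continuity remark (non-atomicity) to split $\mathcal D_b(t^\star)$; your identification of the reachable range of $\mu_1-\mu_2$ over boundary splits with the interval $[\Delta(t^{\star+}),\Delta(t^{\star-})]$ is correct (I verified the endpoint computations) and treats the two degenerate branches (i) and (ii) in a single, more transparent step, whereas the paper handles them by ad hoc modifications of the exchange argument. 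What each approach buys: the paper's exchange argument is self-contained and directly quantifies how any deviation hurts, matching the water-leveling intuition; your supporting-functional argument yields a cleaner global optimality certificate, makes the equality/sharpness discussion explicit, and does not need monotonicity of $\Delta$ except to interpret the crossing. The only minor imprecision is your equality characterization ``precisely when $A\subseteq D_1\subseteq A\cup C$ up to null sets,'' where nullity should be understood with respect to the measure $|p_1-t^\star p_2|\,\mathrm{d}\br$ rather than Lebesgue measure; this does not affect the proof, since you only use the equality on the threshold family.
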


\begin{proof} Observe that $\Delta(t)$ is a monotone decreasing function of $t$.

Consider the case in which $\Delta(t^\star)=0$, and temporarily assume that the set $\mathcal D_b(t^\star)$ has zero measure.  Note that $\I-\P$ is symmetric (since $\Delta(t^\star)=0$) and has the form
\begin{align}
\I-\P = \begin{bmatrix}
1-\mu_1(t^\star) & \mu_2(t^\star)-1 \\
\mu_1(t^\star) -1 & 1-\mu_2(t^\star)
\end{bmatrix}. \label{eq:symmat}
\end{align}
Choose an arbitrary subset $\delta_1 \in \mathcal D_1(t^\star)$ with positive measure with respect to $p_1(\br)$.  Clearly we cannot remove such a $\delta_1$  from $D_1(t^\star)$ and add it to $D_2(t^\star)$, since this will increase the sum of absolute values in the first column, and thus $\rhom$.  Neither can we exchange points between domains.  To see this, define a second set $\delta_2\in \mathcal D_2(t^\star)$ having positive measure with respect to $p_2(\br)$, and consider the sets $\bar D_1 = (D_1(t^\star) / \delta_1) \cup \delta_2$ and $\bar D_2 = (D_2(t^\star) / \delta_2) \cup \delta_1$ with corresponding measures $\bar \mu_1$ and $\bar \mu_2$.  Observe that the differences in measures is given by
\begin{subequations}
\begin{align}
\bar \mu_1 -\mu_1(t^\star) =  - \int_{\delta_1} p_1(\br) \d \br + \int_{\delta_2} p_1(\br) \d \br \label{eq:diff1} \\ 
\bar \mu_2 -\mu_2(t^\star) =  - \int_{\delta_2} p_2(\br) \d \br + \int_{\delta_1} p_2(\br) \d \br. \label{eq:diff2}
\end{align}
\end{subequations}
Using the definitions of $\mathcal D_1(t^\star)$ and $\mathcal D_2(t^\star)$, \eqref{eq:diff1} and \eqref{eq:diff2} imply
\begin{align}
\bar \mu_1 -\mu_1(t^\star) < -t^\star [\bar \mu_2 -\mu_2(t^\star)]. \label{eq:t}
\end{align}
Recognize that $t^\star > 0$.  Thus, there are three possibilities: the RHS of Eq.\ \eqref{eq:t} is positive, negative, or zero.  In all three cases, $\bar \mu_j < \mu_j(t^\star)$ for either $j=1$ or $j=2$.  We thereby conclude that the minimum value of $\rhom$ is given by the zero of $\Delta(t)$, which is unique by monotonicity and the assumption that $\mathcal D_b(t^\star)$ has zero measure.  Note that in this case, the sets given by Eqs.\ \eqref{eq:D1} and \eqref{eq:D2} automatically partition $\Gamma$ up to a set of measure zero, whose points we may assign to either $D_1$ or $D_2$.

When $\Delta(t^\star)=0$ but the set $\mathcal D_b(t^\star)$ does not have measure zero, select $\mathcal B_1$ and $\mathcal B_2$ such that $\mu_{b,1}(t^\star)-\mu_{b,2}(t^\star)=0$.  We then apply the steps leading to inequality \eqref{eq:t}.  This same argument applies to the case that $\Delta(t)$ is discontinuous at $t^\star$, with sole modification that the difference $\mu_{b,1}(t^\star)-\mu_{b,2}(t^\star)$ be equal to the jump discontinuity in $\Delta t$ at $t^\star$.  \qed

\end{proof}

\begin{figure}\begin{center}
\includegraphics[width=12cm]{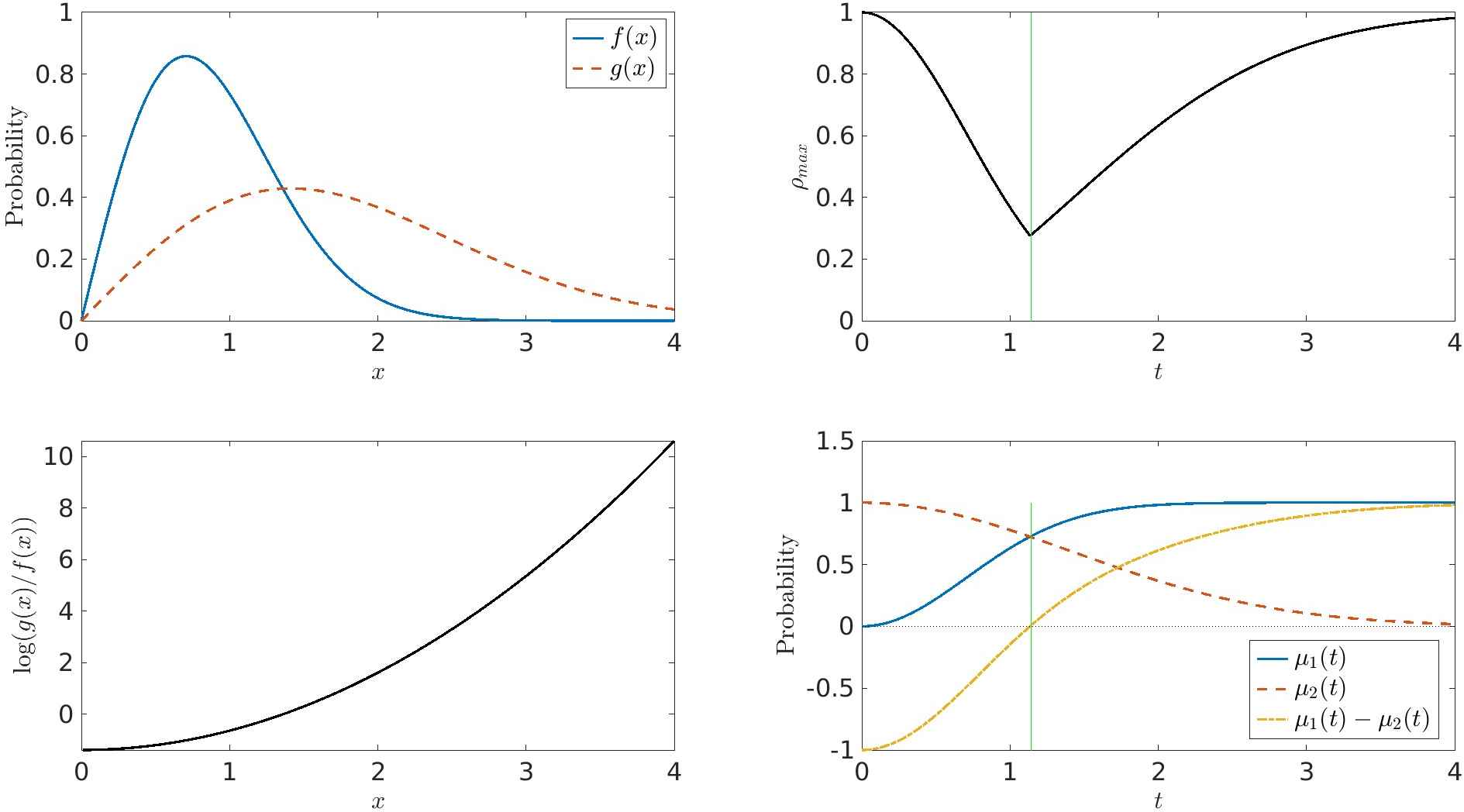}\end{center}\caption{Example of the analysis yielding the optimal $\rhom$ for a pair of distributions.  The PDFs $f(x)$ and $g(x)$ are Weibull distributions with scale parameters of 1 and 2, respectively, and both having a shape parameter of 2.  The top-left plot shows the underlying distributions.  The bottom-left plot shows the logarithm of the ratio of PDFs.  Clearly, this ratio is a monotone increasing function of $x$.  This implies that the sets $D_1$ and $D_2$ are defined by the inequalities $x \le t$ and $x \ge t$.  The top-right plots shows $\rhom(t)$ as a function of $t$, whereas the bottom plot shows the measures $\mu_1(t)$ and $\mu_2(t)$ as a function of $t$.  Note that the value of $t^\star$ given by the intersection of $\mu_1(t)$ and $\mu(t)$ is the point for which $\rhom(t)$ is minimized.}\label{fig:wblexamps}
\end{figure}

\medskip

Figure \ref{fig:wblexamps} illustrates the solution given by Lemma \ref{lem:water}.  In particular, we consider two Weibull PDFs on the domain $x\in [0,\infty)$ defined by \cite{multi}
\begin{align}
f(x)&= 2xe^{-x^2} \\ 
g(x)&= (x/2) e^{-(x/2)^2}.
\end{align} 
The four panels of Fig.\ \ref{fig:wblexamps} show: the PDFs (top left); the logarithm of the ratios of PDFs (bottom left); $\rhom$ as a function of the $t$ appearing in the definitions of $D_1(t)$ and $D_2(t)$ (top right); and the measures $\mu_1(t)$ and $\mu_2(t)$, as well as their differences (bottom right).  Note that the minimum value of $\rhom(t)$ corresponds to the value of $t$ for which $\mu_1(t)-\mu_2(t)=0$, as indicated by the vertical green line.  
 
\begin{figure}\begin{center}
\includegraphics[width=13.5cm]{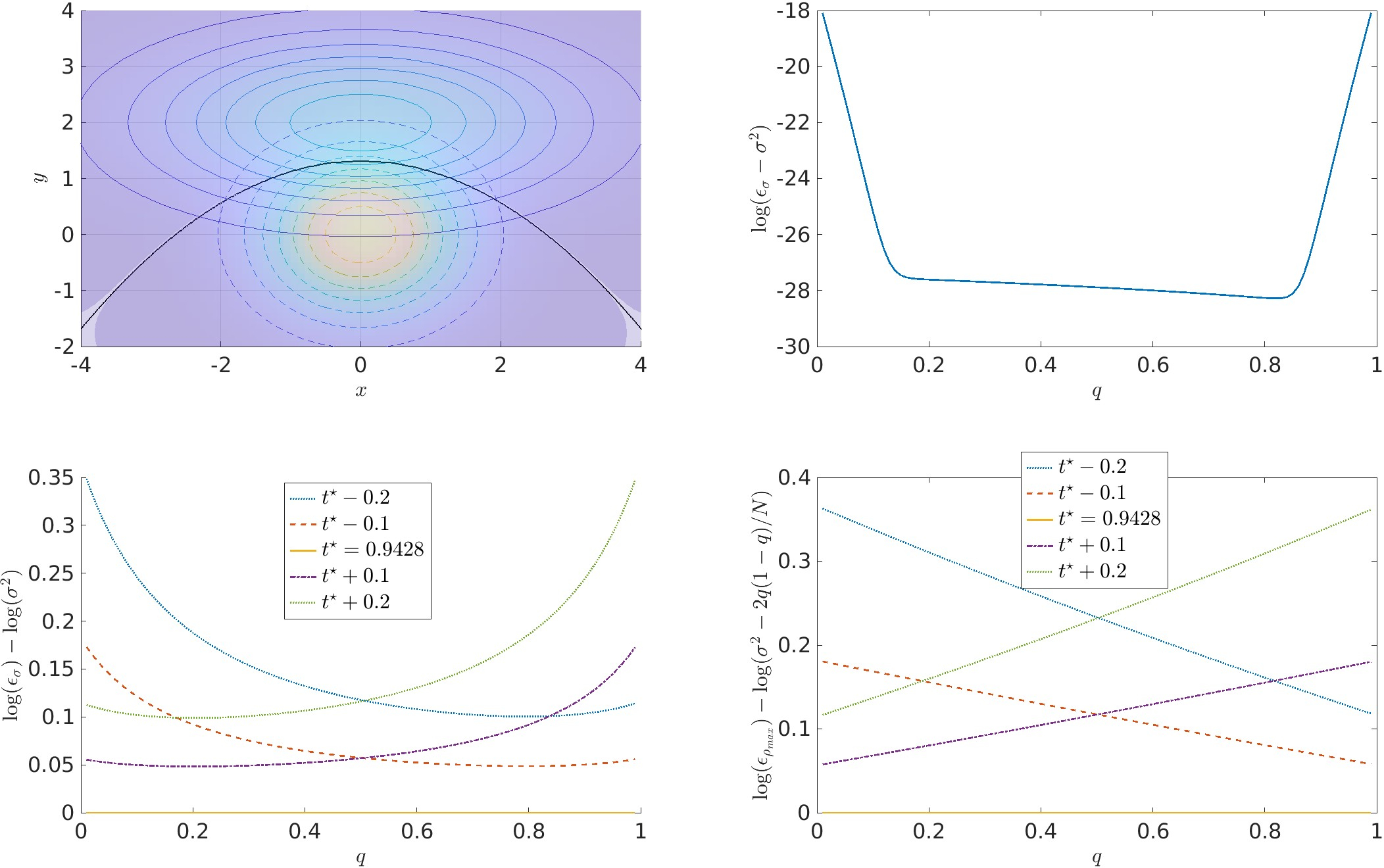}\end{center}\caption{{\it Top left:} Heat maps and contour plots for the PDFs given by Eqs.\ \eqref{eq:f1} and \eqref{eq:g1}.  The purple background corresponds to regions where the PDFs approach zero.  The solid parabola is the boundary associated with $t^\star$.  {\it Top right:}  The logarithm of the difference between the upper bound $\epsilon_\sigma$ given by the RHS of \eqref{eq:tightinequality} and $\sigma^2(\I)$.  {\it Bottom-left:} Logarithm of the ratio of $\epsilon_\sigma$ and $\sigma^2(\I)$.  {\it Bottom-right:} Logarithm of the ratio of $\epsilon_{\rhom}$ and $\sigma^2(\I) - 2q(1-q)/s$.  The latter difference is the excess uncertainty due to the imperfect diagnostic test.  For all plots, we take $s=100$.}\label{fig:2dpdfs}
\end{figure} 
 
Figure \ref{fig:2dpdfs} illustrates the implications of Lemma \ref{lem:water} in the context of the bounds given in Sec. \ref{sec:bounds}.  The top-left plot superimposes heat maps and contour plots for two PDFs given by
\begin{subequations}
\begin{align}
f(x,y) &= \frac{1}{2\pi}e^{-x^2/2 - y^2/2}, \label{eq:f1} \\
g(x,y) &= \frac{1}{4\pi}e^{-x^2/8 - (y-2)^2/2}. \label{eq:g1}
\end{align}
\end{subequations}
The parabola is the boundary associated with $t^\star = 0.9428$, which defines the minimum value of $\rhom$.  The top-right plot shows $\log(\sigma^2 - \epsilon_{\sigma})$, both as a function of prevalence; see Remark \ref{rem:useful}.  This plot shows that $\epsilon_\sigma$ is a tight bound to within the precision with which the computations were done.  The bottom-left plot shows $\log(\epsilon_{\sigma} / \sigma^2)$ for different choices of prevalence estimation domains, illustrating the optimality of the sets associated with $t^\star$.  The bottom-right plot shows the corresponding excess uncertainty relative to $\sigma^2$ for optimal and non-optimal values of $t^\star$.

\section{Discussion}
\label{sec:discussion}

\subsection{Challenges of Generalizing Lemma \ref{lem:water}}

An interesting observation of Lemma \ref{lem:water} is that the diagonal elements of $\P$ are equal when $\rhom$ is minimized.  It seems reasonable that this property should carry over to the multiclass setting and thereby generalize Eqs.\ \eqref{eq:D1star}--\eqref{eq:Dbstar}.  However, the situation is not straightforward.  A simple example illustrates the key challenges.

\begin{example}[Partially Disjoint Uniform Distributions]  Define
\begin{align}
u(x;y) = \begin{cases}
1 & y-1/2 \le x \le y+1/2 \\
0 & {\rm otherwise}
\end{cases}
\end{align}
which is the uniform distribution in $x$ on a unit interval centered at $y$.  Let
\begin{subequations}
\begin{align}
p_1(r)&=u(r;0.5) \label{eq:u1} \\
p_2(r)&=u(r;1.4) \\
p_3(r)&=u(r;2).\label{eq:u3}
\end{align}
\end{subequations}
See also Fig.\ \ref{fig:3pdfs}.  
\begin{figure}
\includegraphics[width=12cm]{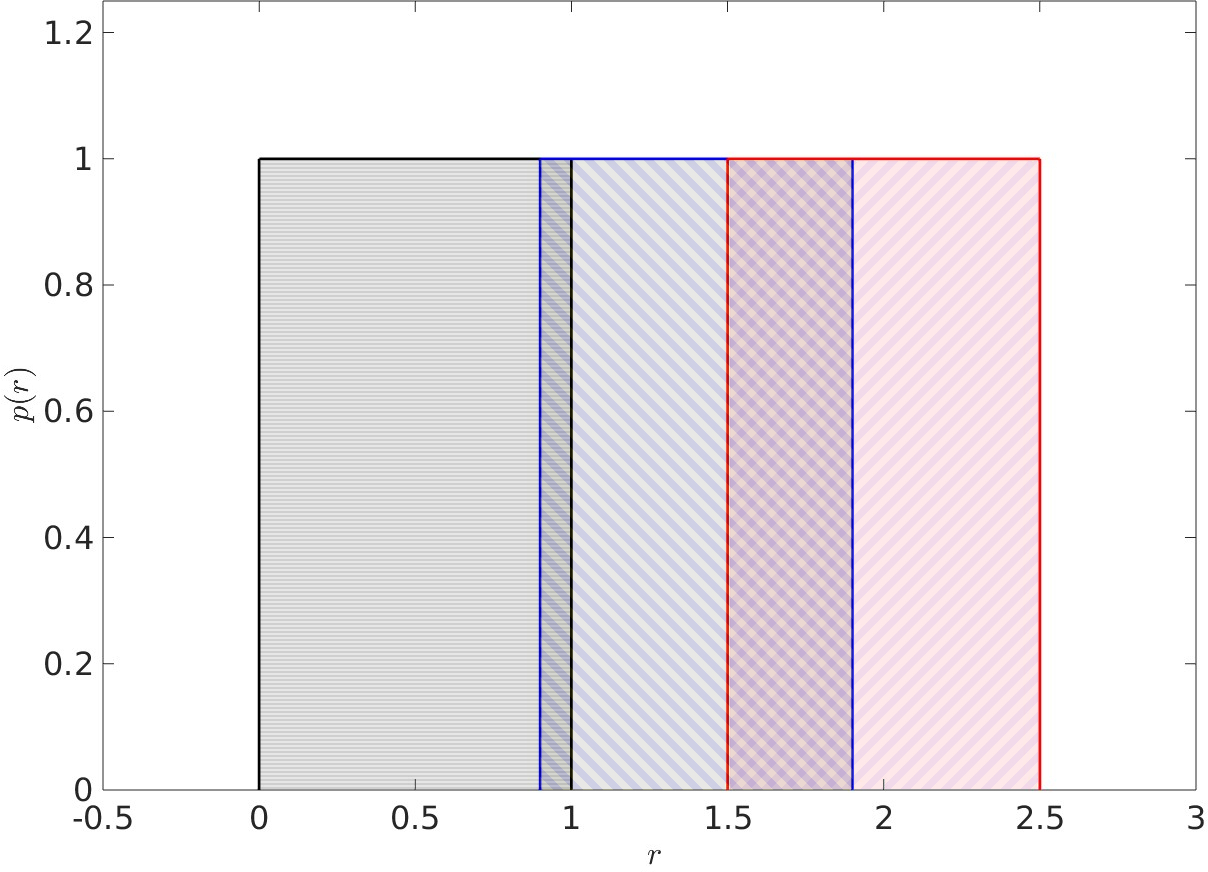}\caption{Three uniform distributions given by Eqs.\ \eqref{eq:u1}--\eqref{eq:u3}.  Note that there is significantly less overlap between $p_1(r)$ and $p_2(r)$ than between $p_2(r)$ and $p_3(r)$.}\label{fig:3pdfs}
\end{figure}
Note that for these distributions, a partition $U$ can be defined in terms of two points $x_{1,2}$ and $x_{2,3}$ that act as boundaries between classes $C_j$ and $C_{j+1}$, $j=1,2$.  It is straightforward to show that an optimal partition $U^\star$ is given by $x_{1,2}=0.9$ and $x_{2,3}=1.7$, which corresponds to the matrix 
\begin{align}
\P(U^\star) = \begin{bmatrix}
0.9 & 0 & 0 \\
0.1 & 0.8 & 0.2 \\
0 & 0.2 & 0.8
\end{bmatrix}, \label{eq:exampmat}
\end{align}
having $\rhom=0.2$.  We could also choose $x_{1,2}=0.8$, which would make all three diagonal entries identical without changing $\rhom$.  However, this would be a poor decision from a diagnostic standpoint, since only $p_1(r)$ has probability mass on the domain $r\in [0.8,0.9]$.  
\end{example}

This example illustrates several important properties that differentiate the multiclass setting from its binary counterpart.  As an analogy, first observe that we may interpret each class as the $j$th vertex or node in a graph, with an edge connecting vertices $j$ and $j'$ if $p_j(r)$ and $p_{j'}(r)$ have overlapping support.  Thus, connected nodes indicate pairs of PDFs between which we can redistribute probability mass (by changing $U$) to minimize $\rhom$.  In the binary problem, there is only one such edge, whereas in the multiclass problem, mass can be redistributed along several distinct paths.  Compounding this, the mass that can be transferred from class $C_j$ to $C_{j'}$ may not equal the mass that can be transferred from $C_{j'}$ to $C_{j''}$.  This implies that two optimal partitions $U^\star$ and $V^\star$ can differ by more than boundary sets defined by analogy to Eq.\ \eqref{eq:Db}.

Despite this complication, we can still deduce certain properties of the collection $\mathcal P$ of matrices $\P$ that minimize $\rhom$, and these properties may be useful for constructing useful optimization algorithms.  The following lemma is motivated by the structure of Eq.\ \eqref{eq:symmat}.
\begin{lemma}[Constant Diagonal Matrix]\label{lem3}
Let $\mathcal P$ be the set of $c\times c$ matrices $\P$ that minimize $\rhom$ for a given set of conditional PDFs $p_k(\br)$ for $\br\in \Gamma$, $k=1,2,...,c$.       Then $\mathcal P$ contains a matrix whose diagonal entries are equal.
\end{lemma}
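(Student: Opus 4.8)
The plan is to recast the claim geometrically and exploit convex duality. Write $\rho^\star=\min_U\rho_{\max}(U)$ and $v^\star=1-\rho^\star$; since a confusion matrix with constant diagonal entry $v$ has $\rho_{\max}=1-v$, the lemma asserts exactly that $(v^\star,\dots,v^\star)$ is an achievable diagonal, i.e.\ that it lies in the set $\mathcal F=\{(P_{1,1}(U),\dots,P_{c,c}(U)):U\text{ a partition of }\Gamma\}\subseteq\mathbb R^c$. The first ingredient is that $\mathcal F$ is convex: given two partitions $\{D_i\}$ and $\{E_j\}$ with diagonals $\mathbf d$ and $\mathbf d'$, one splits each overlap cell $D_i\cap E_j$ into two pieces by applying Lyapunov's convexity theorem to the two-component measure $B\mapsto\big(\int_B p_i\d\br,\ \int_B p_j\d\br\big)$ (which is non-atomic by our absolute-continuity hypothesis), then assigns the pieces according to $\{D_i\}$ and to $\{E_j\}$ respectively; the resulting partition has diagonal $\lambda\mathbf d+(1-\lambda)\mathbf d'$. (Equivalently, invoke the Dvoretzky--Wald--Wolfowitz theorem for partitions of non-atomic measures.) Recall also that $v^\star>1/2$, by (P3).

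Because $\min_k d_k=\min_{\boldsymbol\lambda\in\Delta_c}\boldsymbol\lambda^{\T}\mathbf d$ and the pairing $(\mathbf d,\boldsymbol\lambda)\mapsto\boldsymbol\lambda^{\T}\mathbf d$ is bilinear with $\mathcal F$ convex and $\Delta_c$ convex and compact, minimax duality (Sion's theorem) yields
\begin{align}
v^\star=\max_{\mathbf d\in\mathcal F}\min_k d_k=\min_{\boldsymbol\lambda\in\Delta_c}\ \sup_{\mathbf d\in\mathcal F}\boldsymbol\lambda^{\T}\mathbf d=\min_{\boldsymbol\lambda\in\Delta_c}\int_\Gamma\max_k\big[\lambda_k\,p_k(\br)\big]\d\br,
\end{align}
with the outer minimum attained at some $\boldsymbol\lambda^\star$. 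For any optimal partition $U$ one has $P_{k,k}(U)\ge v^\star$ for every $k$ while $\sum_k\lambda^\star_k P_{k,k}(U)\le v^\star$, so $\sum_k\lambda^\star_k\big(P_{k,k}(U)-v^\star\big)=0$; hence $P_{k,k}(U)=v^\star$ whenever $\lambda^\star_k>0$. In particular, if $\boldsymbol\lambda^\star$ has all entries positive then \emph{every} optimal confusion matrix already has constant diagonal and we are done; the real content is the degenerate case $S_0:=\{k:\lambda^\star_k=0\}\ne\emptyset$, where $S_+:=\{1,\dots,c\}\setminus S_0\ne\emptyset$ because $\sum_k\lambda^\star_k=1$.

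In that case I would fix one optimal partition $\{D^\star_k\}$. By the previous paragraph $P_{j,j}(U^\star)=v^\star$ for $j\in S_+$, so $V_j:=D^\star_j\cap\{p_j>0\}$ has $\int_{V_j}p_j=v^\star$. A short subgradient computation for the minimizer $\boldsymbol\lambda^\star$ of the dual objective — using its positive homogeneity of degree one — shows that the set $N:=\{\br:p_j(\br)=0\ \text{for all }j\in S_+\}$ satisfies $\int_N p_k\ge v^\star$ for every $k\in S_0$. Since $N$ is disjoint from $\{\br:p_j(\br)>0\ \text{for all }j\in S_+\}$, that latter set has $p_k$-measure at most $1-v^\star$, and therefore the ``shavable'' region $D^\star_k\cap\bigcup_{j\in S_+}\{p_j=0\}$ carries $p_k$-measure at least $P_{k,k}(U^\star)-(1-v^\star)\ge P_{k,k}(U^\star)-v^\star$, the last inequality using (P3). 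Hence, by non-atomicity, I can carve $Z_k\subseteq D^\star_k\cap\bigcup_{j\in S_+}\{p_j=0\}$ with $\int_{Z_k}p_k=P_{k,k}(U^\star)-v^\star$, put $D'_k:=D^\star_k\setminus Z_k$ so that $\int_{D'_k}p_k=v^\star$, and reassign each point of $Z_k$ (and the $p_j$-null remnants $D^\star_j\setminus V_j$) into some $V_j$ on which the corresponding $p_j$ vanishes, which leaves those diagonal entries at $v^\star$. The partition $\{V_j\}_{j\in S_+}\cup\{D'_k\}_{k\in S_0}$ of $\Gamma$ then has every diagonal entry equal to $v^\star$, so $\rho_{\max}=\rho^\star$ and its confusion matrix is the required element of $\mathcal P$.

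The main obstacle is precisely this degenerate case: the minimax argument pins down only the classes $k$ with $\lambda^\star_k>0$, and the point is to drive the remaining diagonal entries down to the \emph{common} value $v^\star$ rather than to some larger level — which is exactly why the mass estimate $\int_N p_k\ge v^\star$ and property (P3) enter. A secondary technical point is establishing the convexity of $\mathcal F$ carefully (the Lyapunov / Dvoretzky--Wald--Wolfowitz step); everything else is elementary manipulation of the partition together with the absolute-continuity assumption.
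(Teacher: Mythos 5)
Your proposal is correct, but it takes a genuinely different route from the paper's. The paper works directly on an optimal partition: a contradiction argument shows that some class $k$ attaining the minimal diagonal entry has ${\rm supp}[p_k(\br)]$ disjoint from every domain $D_j$ whose diagonal exceeds the minimum, and then exactly the surplus $P_{j,j}-\min_n P_{n,n}$ is shaved off each such $D_j$ and absorbed into $D_k$ along sets of $p_k$-measure zero, leaving $\rhom$ unchanged; no duality is involved. You instead establish convexity of the set of achievable diagonals via Lyapunov's theorem, invoke Sion's minimax theorem to produce a dual weight $\boldsymbol\lambda^\star$ with $v^\star=\min_{\boldsymbol\lambda\in\Delta_c}\int_\Gamma\max_k\lambda_k p_k(\br)\d\br$, conclude $P_{k,k}=v^\star$ on the support of $\boldsymbol\lambda^\star$ for \emph{every} optimal partition, and treat the zero-weight classes with the first-order mass estimate $\int_N p_k(\br)\d\br\ge v^\star$ followed by a shaving-and-reassignment step that is structurally the same as the paper's final move. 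The heavier machinery buys something real: it shows that in the non-degenerate case every optimal confusion matrix already has constant diagonal, and it connects the lemma to the paper's Lemma \ref{lem4}, since $\boldsymbol\lambda^\star$ is precisely the prevalence whose Bayes partition equalizes the diagonal whenever that is possible, with your degenerate case explaining when the hypothesis of Lemma \ref{lem4} can fail. Two caveats: the ``short subgradient computation'' is asserted rather than carried out --- it is true, e.g.\ the one-sided directional derivative of $\phi(\boldsymbol\lambda)=\int_\Gamma\max_k\lambda_kp_k(\br)\d\br$ at $\boldsymbol\lambda^\star$ along the feasible direction toward the $k$th vertex of $\Delta_c$ equals $\int_N p_k(\br)\d\br - v^\star$, which must be nonnegative --- and your degenerate-case estimate relies on $v^\star>1/2$, i.e.\ property (P3); that is a standing assumption of the paper, but the paper's own construction does not need it, so your argument is marginally less general on this point.
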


\begin{proof}[By Construction]  Assume $\P \in \mathcal P$.  Let $K=\{k_1,k_2,...,k_m \}$ denote the columns for which $P_{k,k}$ attains its minimum.  If $m=c$, then the lemma is trivially true.  

Assume therefore that $m < c$.  Recall that by definition, $P_{j,k}$ is the probability mass from the $k$th distribution contained in domain $D_j$.  Observe that there must exist at least one $k'\in K$ such that
\begin{align}
{\rm supp}[p_{k'}(\br)] \bigcap \left[\bigcup_{j:j\notin K} D_j \right] = \emptyset.  \label{eq:noverlap}
\end{align}
That is, there must exist at least one $k'$ such that the support of $p_{k'}(\br)$ is disjoint from every $D_j$ when $j\notin K$.  To prove Eq.\ \eqref{eq:noverlap}, argue by contradiction.  Assuming the result is false implies that for each $k\in K$ there exists at least one $j_k \notin K$ such that $p_k(\br)$ has finite measure on $D_{j_k}$.  Let $\epsilon > 0$ be the magnitude of the difference between the smallest two {\it unique}, diagonal elements of $\P$.  By continuity of measure, for each $k\in K$, we may choose a domain $D_{k,j_k}\subset D_{j_k}$ such that: (i) the $D_{k,j_k}$ are disjoint; and (ii) $0 < \mu_k(D_{k,j_k}) < \epsilon/(2c)$ and $0 \le \mu_j(D_{k,j_k}) < \epsilon/(2c)$, where $\mu_k(D)$ is the measure of $D$ with respect to the $k$th distribution.  Define the new domains
\begin{align}
\tilde D_{k} &= D_k \bigcup D_{k,j_k} & k\in K \\
\tilde D_j &= D_j / \left[\bigcup_{k: k\in K} D_{k,j_k} \right], & j\notin K
\end{align}
and let $\tilde \P$ be the induced confusion matrix.  Clearly
\begin{align}
\min_n P_{n,n} < \min_n \tilde P_{n,n},
\end{align}
since we have at most increased the smallest diagonal element of $\P$ by $\epsilon/2$ and decreased the next-smallest diagonal element by $\epsilon/2$.
This implies that the $\rhom$ associated with $\I-\tilde \P$ is less than that corresponding to $\I-\P$,  contradicting the assumption $\P \in \mathcal P$.  Thus Eq.\ \eqref{eq:noverlap} is true.  

To prove the main result, let $k\in K$ correspond to any $p_k(r)$ for which Eq.\ \eqref{eq:noverlap} is true, since there must be at least one.  Let 
\begin{align}
\epsilon_j = P_{j,j} - \min_n P_{n,n}.
\end{align}
For each $j\notin K$, we may choose a subset $\hat D_j \subset D_j$ such that $\mu_j(\hat D_j) = \epsilon_j$ and $\mu_k(\hat D_j)=0$.  Clearly the partition $U'$ defined as
\begin{subequations}
\begin{align}
D_k' &= D_k \bigcup \left[ \bigcup_{j:j\ne K} \hat D_j \right] \\
D_j' &= D_j / \hat D_j, & j\notin K \\
D_{n}' &= D_n & n\in K, n\ne k
\end{align}
\end{subequations} 
induces a matrix $\P'$ for which the largest Gershgorin radius of $\I-\P'$ is $\rhom$.  However, by construction, we have removed exactly $P_{j,j}-P_{k,k}$ from each diagonal element of $\P$, so that they are all constant.  \qed

\end{proof}

\begin{remark} Lemma \ref{lem3} mirrors the example at the beginning of this section.  While we are not guaranteed that a $\P\in \mathcal P$ has a constant diagonal, we can always shift probability mass to enforce this criterion.  However, as the example also illustrates, doing so does not necessarily yield an optimal diagnostic interpretation of the assay.  Nonetheless, the existence of such a matrix resolves uniqueness and suggests that one route to minimizing $\rhom$ is to solve the optimization problem
\begin{align}
\rhom^\star = \min_{U} [1-P_{j,j}(D_j)]
\end{align}
for any $j$, subject to the constraints
\begin{subequations}
\begin{align}
P_{j,j}(D_j) &= P_{k,k}(D_k) \\
D_j \bigcap D_k &= \emptyset
\end{align} 
\end{subequations}
for all $k \ne j$.   Note that the resulting $\P$ is not guaranteed to be a Toeplitz matrix \cite{toeplitz}.
\end{remark}
 
In some cases, it is possible to strengthen Lemma \ref{lem3} given optimal classification domains.  In particular, assume $c$ classes and let $q_k \in [0,1]$, $k=1,2,...,c$ satisfy 
\begin{align}
\sum_{k=1}^c q_k = 1. \label{eq:qsum}
\end{align} 
\begin{definition}
The sets
\begin{align}
D_k^\star(\bt) = \{r:q_k p_k(\br) > q_jp_j(\br) \,\, \forall j: j\ne k\} \label{eq:optclass}
\end{align}
are  optimal classification domains associated with the generalized prevalence $\bt=(q_1,q_2,...,q_c)$ if the $D_k^\star(\bt)$ partition $\Gamma$ up to sets of measure zero in all of the $p_k(\br)$. \label{def:multi}
\end{definition}

It is straightforward to show that the sets defined by Eq.\ \eqref{eq:optclass} minimize the classification error given by Eq.\ \eqref{eq:error} when all $\br$ have zero measure and the sets partition $\Gamma$ \cite{Luke23}.  In fact, these sets mirror the structure of Eqs.\ \eqref{eq:D1} and \eqref{eq:D2}.  This observation motivates the following lemma.
 
\begin{lemma}\label{lem4}
Assume that there exists a discrete probability density $\bt=(q_1,q_2,...,q_c)$ for which the $c\times c$ confusion matrix $\P^\star$ induced by $ D_k^\star(\bt^\star)$ has equal diagonal entries.  That is, $P_{k,k}^\star = P_{j,j}^\star$ for all pairs $j$ and $k$.  Then the $\rhom$ associated with $\P^\star$ attains its minimum.  
\end{lemma}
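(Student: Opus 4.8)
The plan is to exploit the single defining property of the Bayes classification domains $D_k^\star(\bt^\star)$: they minimize the classification error, and for a \emph{fixed} prevalence $\bt^\star$ that error is an affine function of the diagonal of the confusion matrix. Concretely, by Eq.~\eqref{eq:error} the classification error associated with an arbitrary partition $U=\{D_j\}$ under prevalence $\bt^\star=(q_1^\star,\dots,q_c^\star)$ is $\mathcal E(U)=1-\sum_{j=1}^c q_j^\star P_{j,j}(U)$, since $\int_{D_j}p_j(\br)\d\br=P_{j,j}(U)$. The hypothesis of the lemma is exactly that the $D_k^\star(\bt^\star)$ partition $\Gamma$ up to sets of measure zero in every $p_k$, so the remark following Definition~\ref{def:multi} applies: these domains minimize $\mathcal E$ over all partitions. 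Equivalently, they maximize $\sum_{j=1}^c q_j^\star P_{j,j}(U)$ over all partitions $U$.

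\textbf{Step 2: identify the value of the maximum.} Write $p$ for the common diagonal value $P_{k,k}^\star$ guaranteed by hypothesis. Because $\bt^\star$ is a probability density, $\sum_{j=1}^c q_j^\star P_{j,j}^\star = p\sum_{j=1}^c q_j^\star = p$. Combining this with the maximality from Step 1 gives the key bound: for \emph{every} partition $U$, $\sum_{j=1}^c q_j^\star P_{j,j}(U)\le p$. Note also that $\rhom(\P^\star)=\max_j\{1-P_{j,j}^\star\}=1-p$.

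\textbf{Step 3: conclude by contradiction.} Suppose some partition $U$ had $\rhom(U)<\rhom(\P^\star)=1-p$. Since $\rhom(U)=1-\min_j P_{j,j}(U)$, this would force $P_{j,j}(U)>p$ for every $j$, and therefore $\sum_{j=1}^c q_j^\star P_{j,j}(U) > p\sum_{j=1}^c q_j^\star = p$, contradicting the bound of Step 2. Hence no such $U$ exists, i.e.\ $\rhom(\P^\star)\le\rhom(U)$ for every partition $U$, so $\rhom(\P^\star)$ attains its minimum.

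\textbf{Main obstacle.} There is no deep difficulty; the only point requiring care is that the inequality $\sum_j q_j^\star P_{j,j}(U)>p$ in Step 3 be genuinely \emph{strict} even if some components of $\bt^\star$ vanish, which is handled by the normalization $\sum_j q_j^\star=1$ forcing at least one $q_j^\star>0$. It is worth noting in the write-up that the argument in fact proves slightly more — any confusion matrix induced by Bayes-optimal domains whose diagonal happens to be constant is automatically $\rhom$-optimal — which is the natural multiclass analogue of the binary water-leveling picture in Lemma~\ref{lem:water}, where equality of the diagonal entries was precisely the signature of optimality.
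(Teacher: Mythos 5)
Your proof is correct and is essentially the paper's own argument: both rest on the Bayes optimality of the domains $D_k^\star(\bt^\star)$ (they maximize $\sum_j q_j^\star P_{j,j}$ over partitions) combined with the observation that, when the diagonal of $\P^\star$ is constant, this weighted sum equals $1-\rhom^\star$, while for any partition $U$ it is at least $\min_j P_{j,j}(U)=1-\rhom(U)$. The only cosmetic differences are that you invoke the optimality of the Bayes domains from the remark after Definition~\ref{def:multi}, whereas the paper re-derives it explicitly in the inequality chain \eqref{eq:longineq}, and you phrase the final step as a contradiction rather than a direct chain of inequalities.
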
 
 
\begin{proof} Note first that for any overlap matrix $\P$,
\begin{align}
\sum_{k=1}^c q_k P_{k,k} \ge \min_k \{P_{k,k}\}  \sum_{k=1}^c q_k = \min_k \{P_{k,k}\} = 1-\rhom, \label{eq:qineq}
\end{align}
since $\bt$ is a probability measure.  Moreover, this inequality is sharp for $\P^\star$, since its diagonal entries are all the same.  That is,
\begin{align}
\sum_{k=1}^c q_k P_{k,k}^\star = P_{j,j}^\star = 1-\rhom^\star,
\end{align}
for any value of $j=1,2,...,c$.  

To show that $\rhom^\star = \min\{\rhom\}$, let  $U'=\{D_k'\}$ be any other partition of $\Gamma$, and let $\P'$ be the induced confusion matrix.   One finds that
\begin{align}
\sum_{k=1}^c q_k P_{k,k}' &= \sum_{k=1}^c \int_{D_k'} q_k p_k(\br) \d \br \nonumber \\
&= \sum_{k=1}^c \left[  \int_{D_k'\cap D_k^\star} q_k p_k(\br) \d \br + \sum_{j:j\ne k} \int_{D_k' \cap D_j^\star} q_k p_k(\br) \d \br\right ] \nonumber \\
& \le \sum_{k=1}^c \left[  \int_{D_k'\cap D_k^\star} q_k p_k(\br) \d \br + \sum_{j:j\ne k} \int_{D_k' \cap D_j^\star} q_j p_j(\br) \d \br\right ] \nonumber \\
&= \sum_{k=1}^c \sum_{j=1}^c \int_{D_k' \cap D_j^\star} q_j p_j(\br) \d \br = \sum_{j=1}^c \int_{D_j^\star} q_j p_j(\br) \d \br \nonumber \\
&= \sum_{k=1}^c q_k P_{k,k}^\star, \label{eq:longineq}
\end{align}
since the $D_k'$ and $D_k^\star$ are both partitions of $\Gamma$.  Note that: (i) the third line of \eqref{eq:longineq} is a consequence of the structure of $D_k^\star$; and (ii) inequality \eqref{eq:longineq} amounts to a proof of the optimality of the sets in Def. \ref{def:multi}.  Combining this result with Eq.\ \eqref{eq:qineq}, we find
\begin{align}
1-\rhom(U') \le \sum_{k=1}^c q_k P_{k,k}' \le \sum_{k=1}^c q_k P_{k,k}^\star = 1-\rhom^\star.
\end{align}
Thus, the result holds, since $U'$ was an arbitrary partition different from $U^\star$.  \qed

\end{proof}

Lemma \ref{lem4} suggests that an alternate route for minimizing $\rhom$ in a multiclass setting is to find the prevalence $\bt$ such that the optimal classification domains have the same probability mass.  In practice, this is likely reasonable; the PDFs $p_k(\br)$ underlying diagnostic tests are typically unimodal, smooth, and monotone decreasing away from their maxima, which suggests that the $P_{k,k}^\star$ have the needed continuity of measure.   However, this assumption may not be sufficient to guarantee the hypotheses of Lemma \ref{lem4} are satisfied.  In general, we do not know what conditions imply the hypotheses of Lemma \ref{lem4}, although we speculate that the Poincare-Miranda theorem may suffice \cite{PM}.  Such questions are left for future work.

\subsection{Effects of Noise}
\label{subsec:noise}

In many applications, we do not have access to the random variable $\br(\omega)$ directly, but rather only a noisy approximation $\sr$.  The goal of this section is to understand the impact of noise on $\rhom$.  [See Ref.\ \cite{Noisy_labels} for an analysis of the distinct problem wherein the training classes are noisy.]

To make this more precise, assume that $\sr$ suffers from additive noise of the form
\begin{align}
\sr=\br+\eta, \label{eq:noisyrv}
\end{align}
where $\eta$ is an uncorrelated random variable.  In practice, $\eta$ arises from instrument noise associated with the measurement process that maps $\omega$ onto $\br(\omega)$.  Thus, to make specification of $\eta$ fully precise, it is necessary to further expand the notion of the sample space so that each $\omega$ corresponds to the triple of a random individual's class, antibody level(s), and the instrument noise.  We assume that the corresponding $\sigma$-algebra and probability measures always exist.

Equation \eqref{eq:noisyrv} implies that the matrix elements $P_{j,k}$ must be re-interpreted in terms of the usual convolution formula as
\begin{subequations}
\begin{align}
\tilde P_{j,k}&=\int_{D_j} \d \br \int_{\Gamma} \d \bx\,\, \mathcal N(\br-\bx)p_{k}(\bx) = \int_{D_j} \d \br\,\, \tilde p_k(\br) \label{eq:noisymelement} \\
\tilde p_k(\br) &= \int_{\Gamma} \d \bx\,\, \mathcal N(\br-\bx)p_{k}(\bx) \label{eq:noisypdf}
\end{align}
\end{subequations}
where $\mathcal N(\bx)$ is the PDF associated with $\eta$.  For concreteness and to avoid inconsistencies in the meaning of the domain $\Gamma$, assume that $\Gamma = \mathbb R^n$ and $\mathcal N$ is the PDF of a multi-variate normal random variable with mean-zero and positive-definite covariance matrix $\Psi$.  Without loss of generality, we may express $\Psi$  as $\Psi = \varsigma^2 \Phi$ for $\Phi$ a constant, positive-definite matrix whose largest eigenvalue is $1$.  Following the notation of Lemma \ref{lem:water}, let $\rhom^\star(\varsigma^2)$ be the minimum value of $\rhom$ associated with the matrix elements $\tilde P_{j,k}$ defined in terms of the $\tilde p_k(\br)$, and let the $\rhom^\star(0)$ be the minimum associated with the original PDFs $p_k(\br)$.  We seek to answer the following question: does $\rhom^\star(\varsigma^2)$ increase with $\varsigma^2$?  

\begin{lemma}[Noise Decreases Optimal Performance]\label{lem:noise}
Let $\tilde P_{j,k}$ be defined as in Eq.\ \eqref{eq:noisymelement}, let $c=2$ (binary setting), and assume that $\mathcal N$ is the PDF of a mean-zero, normal random variable with positive-definite covariance matrix $\Psi=\varsigma^2 \Phi$, where the largest eigenvalue of $\Phi$ is one.  Assume also that $q_1 \in (0,1)$. Then $\rhom^\star(\varsigma^2)$ is a monotone increasing function of $\varsigma^2$.  
\end{lemma}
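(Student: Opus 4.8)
The plan is to reduce the statement to a single smoothing inequality and then invoke the explicit water-leveling description of $\rhom^\star$ from Lemma \ref{lem:water}. The reduction rests on the semigroup property of Gaussian kernels: because $\Psi=\varsigma^2\Phi$ with $\Phi$ fixed, for any $0\le\varsigma_1^2<\varsigma_2^2$ one has $\mathcal N_{\varsigma_2^2\Phi}=\mathcal N_{(\varsigma_2^2-\varsigma_1^2)\Phi}*\mathcal N_{\varsigma_1^2\Phi}$, so the smoothed densities at the two levels obey $\tilde p_k(\cdot\,;\varsigma_2^2)=g*\tilde p_k(\cdot\,;\varsigma_1^2)$, where $g=\mathcal N_{(\varsigma_2^2-\varsigma_1^2)\Phi}$ is again a symmetric, full-support Gaussian density. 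It therefore suffices to prove that convolving a pair of densities with an arbitrary probability kernel $h$ on $\Gamma=\mathbb R^n$ never decreases $\rhom^\star$; applying this with $h=g$ to every pair of levels then yields that $\rhom^\star(\varsigma^2)$ is nondecreasing, and a refinement (discussed last) upgrades this to strictly increasing.

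Next I would record the variational form of $\rhom^\star$. In the binary case, with $D_2=\Gamma\setminus D_1$, the definition gives $\rhom=1-\min\big(\int_{D_1}p_1\,\d\br,\ \int_{D_2}p_2\,\d\br\big)$, hence
\[
1-\rhom^\star \;=\; \sup_{D}\ \min\!\Big(\textstyle\int_{D}p_1(\br)\,\d\br,\ \int_{\Gamma\setminus D}p_2(\br)\,\d\br\Big),
\]
and Lemma \ref{lem:water} identifies the maximizer as $D_1^\star=\mathcal D_1(t^\star)$ (possibly enlarged by part of the boundary set), where the water-leveling threshold $t^\star$ enforces the balance $\int_{D_1^\star}p_1\,\d\br=\int_{\Gamma\setminus D_1^\star}p_2\,\d\br=1-\rhom^\star$. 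The hypothesis $q_1\in(0,1)$ is exactly the statement, through the correspondence $q_1=(1+t)^{-1}$, that $t^\star$ is finite and strictly positive; this is automatic once the densities are Gaussian-smoothed (their supports are all of $\Gamma$), and it makes the clean case of Lemma \ref{lem:water} available.

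The core observation is that convolution turns a partition into a soft decision rule, and soft rules cannot outperform deterministic ones for this objective. For any measurable $D$, Fubini together with $\int h=1$ and evenness of $h$ yields $\int_D(h*p_1)\,\d\br=\int p_1 H\,\d\bx$ and $\int_{\Gamma\setminus D}(h*p_2)\,\d\br=\int p_2(1-H)\,\d\bx$, where $H=h*\mathbb{I}_D$ takes values in $[0,1]$ (and strictly in $(0,1)$ everywhere when $h$ has full support and both $D$ and its complement have positive measure). Hence the right-hand side of the displayed identity, written for $h*p_1$ and $h*p_2$, is at most $\sup_{H:\Gamma\to[0,1]}\min(\int p_1H,\int p_2(1-H))$. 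Using $\min(a,b)=\inf_{\theta\in[0,1]}[\theta a+(1-\theta)b]$ and the elementary inequality $\sup_H\inf_\theta\le\inf_\theta\sup_H$, the inner maximization over $H$ is pointwise and is solved by the indicator of $\{\theta p_1\ge(1-\theta)p_2\}$; evaluating the bound at $\theta=\theta^\star=(1+t^\star)^{-1}\in(0,1)$, where the inner supremum is attained at $H=\mathbb{I}_{D_1^\star}$, and using the balance $\int_{D_1^\star}p_1=\int_{\Gamma\setminus D_1^\star}p_2=1-\rhom^\star$ collapses the bound to $1-\rhom^\star[(p_1,p_2)]$. Since indicators are themselves soft rules, the supremum over $H$ equals exactly $1-\rhom^\star[(p_1,p_2)]$, and therefore $\rhom^\star[(h*p_1,h*p_2)]\ge\rhom^\star[(p_1,p_2)]$, proving the nondecreasing part.

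For strict monotonicity, observe that equality above would force a nontrivial $D$ with $\min(\int p_1H_D,\int p_2(1-H_D))=1-\rhom^\star$, where $H_D=h*\mathbb{I}_D$; but the minimax computation shows that the soft-rule supremum is attained \emph{only} by rules equal to $0$ or $1$ almost everywhere on $\{\br:p_1(\br)\ne t^\star p_2(\br)\}$, a set of positive Lebesgue measure whenever $p_1\ne p_2$, whereas $H_D$ lies strictly in $(0,1)$ everywhere. Thus no nontrivial $D$ is optimal; I expect the genuine obstacle to be promoting this to a strict inequality for the supremum over the whole family $\{H_D\}$, i.e.\ ruling out that a fixed-variance Gaussian blur $h*\mathbb{I}_D$ approaches an optimal soft rule in the limit. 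This should follow from a perimeter/isoperimetric estimate showing that $h*\mathbb{I}_D$ stays bounded away in $L^1(p_1+p_2)$ from every $\{0,1\}$-valued function (using that $D_1^\star$ has a genuine $(n-1)$-dimensional boundary), combined with continuity of the concave map $H\mapsto\min(\int p_1H,\int p_2(1-H))$; the degenerate case $p_1=p_2$, where $\rhom^\star\equiv\tfrac12$ is constant in $\varsigma^2$, has to be excluded.
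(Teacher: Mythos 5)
Your argument is correct in its main part and takes a genuinely different route from the paper's. Both proofs begin with the same reduction (infinite divisibility / the Gaussian semigroup property, so it suffices to show that a single convolution step cannot lower $\rhom^\star$), and both ultimately lean on Lemma \ref{lem:water} through the balance $P_{1,1}^\star=P_{2,2}^\star=1-\rhom^\star$ and the threshold description of $D_1^\star$. After that the paper compares the two optimal partitions $\tilde U$ and $U$ directly: it writes $\rhom^\star(\varsigma^2)-\rhom^\star(0)=\tilde P_{1,2}-P_{1,2}$, decomposes the integrals over intersections of the two partitions, forms the linear combination $(1+a)(\tilde P_{1,2}-P_{1,2})$ from the symmetric pair of identities, and picks $a=t^\star$ so that the defining inequality $p_1\le t^\star p_2$ on $D_2^\star$ (and its reverse on $D_1^\star$) forces nonnegativity. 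You instead relax the perturbed problem: the identity $\int_D (h*p_1)\d\br=\int p_1 H_D\d\bx$ with $H_D=h*\mathbb{I}_D\in[0,1]$ (valid since the Gaussian kernel is even) exhibits the smoothed problem as a restriction of a soft-rule problem, and the weak-duality step $\sup_H\inf_\theta\le\inf_\theta\sup_H$, evaluated at $\theta^\star=(1+t^\star)^{-1}$, collapses the soft-rule value to $1-\rhom^\star$ via the water-leveling balance; your choice of $\theta^\star$ plays exactly the role of the paper's $a=t^\star$. What your route buys: no set algebra, automatic insensitivity to how the boundary set $\mathcal D_b(t^\star)$ is split (the equality set contributes nothing to the positive part), and validity for an arbitrary symmetric probability kernel, not just the Gaussian. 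Two caveats: your closing paragraph on strictness is, as you acknowledge, not a proof, but the paper's own argument likewise only establishes $\rhom^\star(\varsigma^2)\ge\rhom^\star(0)$ (monotonicity in the weak sense), so your main argument already matches what the paper proves; and your identification of the hypothesis $q_1\in(0,1)$ with $t^\star\in(0,\infty)$ is loose, since the prevalence in the lemma is not the water-leveling threshold --- the existence of a finite positive $t^\star$ is an input you should simply take from Lemma \ref{lem:water}, as the paper implicitly does.
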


\begin{proof}
By the infinite divisibility of the normal distribution \cite{Divisible1,Divisible2}, it suffices to show that $\rhom^\star(\varsigma^2) \ge \rhom^\star(0)$ for any $\varsigma^2$.  Let $P_{j,k}$ be the corresponding matrix elements when $\varsigma=0$.  Note that the optimal partitions $\tilde U$ and $U$ defining $\rhom^\star(\varsigma^2)$ and $\rhom^\star(0)$ are not necessarily the same.  Moreover, by Lemma \ref{lem:water}, the partitions $\tilde U$ and $U$ can be defined in terms of classification domains $\tilde D_j^\star =\tilde{\mathcal D}_j(\tilde t^\star)\cup \tilde{\mathcal B}_j(\tilde t^\star)$ and $D_j^\star = \mathcal D_j(t^\star)\cup \mathcal B_j(t^\star)$ for some parameters $\tilde t^\star$ and $t^\star$, such that, for example
\begin{align}
D_1^\star = \{\br: p_1(\br) > t^\star p_2(\br)\} \cup \mathcal B_1(t^\star). \label{eq:recallprevs}
\end{align}

Recall Lemma \ref{lem:water} and consider the difference
\begin{align}
\rhom^\star(\varsigma^2) - \rhom^\star(0) &= \tilde P_{1,2} - P_{1,2}. 
\end{align}
Because the partitions $\tilde U$ and $U$ cover $\Gamma$, one finds
\begin{align}
\tilde P_{1,2} - P_{1,2} &= \dint{\tilde D_1^\star}{\tilde D_1^\star}\np{2} + \dint{\tilde D_1^\star}{\tilde D_2^\star}\np{2} \nonumber \\
& \qquad - \dint{\tilde D_1^\star}{D_1^\star}\np{2} - \dint{\tilde D_2^\star}{D_1^\star}\np{2} \nonumber \\
&= \dint{\tilde D_1^\star}{\tilde D_1^\star / D_1^\star}\np{2} - \dint{\tilde D_1^\star}{D_1^\star / \tilde D_1^\star}\np{2} \nonumber \\
& \qquad + \dint{\tilde D_1^\star}{\tilde D_2^\star}\np{2} - \dint{\tilde D_2^\star}{D_1^\star}\np{2}, \label{eq:longints}
\end{align}
where the last line arises by considering the difference of the first and third terms in the top equality.
Next recall the identity
\begin{align}
\tilde D_2^\star = [\tilde D_2^\star \cap D_1^\star] \cup [\tilde D_2^\star / D_1^\star]
\end{align}
and observe that 
\begin{align}
\tilde D_2^\star \cap D_1^\star = D_1^\star / \tilde D_1^\star.  
\end{align}
Combining these last three equalities implies that
\begin{align}
\tilde P_{1,2} - P_{1,2} &= \dint{\tilde D_1^\star}{\tilde D_1^\star / D_1^\star}\np{2} + \dint{\tilde D_1^\star}{\tilde D_2^\star / D_1^\star}\np{2} \nonumber \\ 
&\qquad - \dint{\tilde D_2^\star}{D_1^\star}\np{2} \nonumber \\
&= \dint{\tilde D_1^\star}{D_2^\star}\np{2} - \dint{\tilde D_2^\star}{D_1^\star}\np{2} \label{eq:corn1}
\end{align}
By symmetry of the matrix elements (see Lemma \ref{lem:water}), we also find that
\begin{align}
\tilde P_{1,2} - P_{1,2} &= \tilde P_{2,1} - P_{2,1} \nonumber \\
&=\dint{\tilde D_2^\star}{D_1^\star}\np{1} - \dint{\tilde D_1^\star}{D_2^\star}\np{1}.\label{eq:corn2}
\end{align}

Create a linear combination of Eqs.\ \eqref{eq:corn1} and \eqref{eq:corn2} to find
\begin{align}
(1+a)(\tilde P_{1,2} - P_{1,2}) &= a\dint{\tilde D_1^\star}{D_2^\star}\np{2} - a\dint{\tilde D_2^\star}{D_1^\star}\np{2} \nonumber \\
&\quad + \dint{\tilde D_2^\star}{D_1^\star}\np{1} - \dint{\tilde D_1^\star}{D_2^\star}\np{1} \nonumber
\end{align}
for an arbitrary constant $a > -1$.  Recalling Eq.\ \eqref{eq:recallprevs}, the set $D_2^\star$ is one for which
\begin{align}
p_1(\br) \le t^\star p_2(\br).
\end{align}
Thus, we find that
\begin{align}
\tilde P_{1,2} - P_{1,2} &\ge \frac{a-t^\star}{1+a} \dint{\tilde D_1^\star}{D_2^\star}\np{2} \nonumber \\
&\qquad + \frac{1-a/t^\star}{1+a}\dint{\tilde D_2^\star}{D_1^\star}\np{1}.
\end{align}
Since $a$ may be chosen to be any positive number, and $t^\star > 0$ by definition, set $a=t^\star$.  This implies that $\tilde P_{1,2} - P_{1,2} \ge 0$, and thus  $\rhom^\star(\varsigma^2) - \rhom^\star(0) \ge 0$.  \qed
\end{proof}

\begin{remark}
Lemmas \ref{lem:water} and \ref{lem:noise} imply that in the binary case, adding Gaussian noise to the random variable $\br$ never decreases $\rhom^\star$.  It is important to note that this result may not be true for a non-optimal $\rhom$, as the next example illustrates.
\end{remark}

\begin{example}[Noise Increasing Accuracy]
Let $r\in \mathbb R$ and define
\begin{subequations}
\begin{align}
p_1(r) = \begin{cases}
1 & 0 \le r \le 0.6 \\
1 & 1 \le r \le 1.4 \\
0 & {\rm otherwise}
\end{cases}
\end{align}
and 
\begin{align}
p_2(r) = \begin{cases}
1 & 10 \le r \le 11 \\
0 & {\rm otherwise}
\end{cases}.
\end{align}
\end{subequations}
Choose $D_1=(-\infty,1)$ and $D_2=[1,\infty)$.  Clearly $\rhom(0)=0.4$ in this case.  

Select $0 < \varsigma \ll 1$.  Straightforward calculations show that $\rhom(\varsigma^2) < \rhom(0)$, with the difference being approximated by 
\begin{align}
\rhom(0) - \rhom(\varsigma^2) = \frac{1}{\sqrt{2\pi}\varsigma} \int_{-\infty}^{1} \!\!\!\!\!\!\!\! \d r \int_{1}^{1.4}\d z\,\, e^{-\frac{(r-z)^2}{2\varsigma^2}} + \mathcal R, \label{eq:errorapprox}
\end{align}
where the remainder $\mathcal R$ is an exponentially small error term.  
\end{example}

\begin{remark}
Equation \eqref{eq:errorapprox} suggests that when $\varsigma$ is sufficiently small, the change in $\rhom^\star(\varsigma^2)$ is driven primarily by the behavior of the PDFs at the boundary between the domains of the partition.
\end{remark}

\begin{example}[Competition of Scales]
Let $r\in \mathbb R$, fix $q_1=q_2=0.5$, and define 
\begin{subequations}
\begin{align}
p_1(r)= (2\pi)^{-1/2} \exp[-(r+1)^2/2], \\
p_2(r)= (2\pi)^{-1/2} \exp[-(r-1)^2/2].
\end{align}
\end{subequations}
By symmetry, the optimal partition corresponds to $D_1=(-\infty,0)$ and $D_2=(0,\infty)$, with the point $r=0$ being assigned to either set.  One also finds that
\begin{align}
\rhom^\star(0) = \int_{0}^{\infty} \d r \,\,\, \frac{1}{\sqrt{2\pi}} e^{-\frac{(r+1)^2}{2}} < 0.5.
\end{align}
By known properties of the Gaussian distribution, one also finds that
\begin{align}
\rhom^\star(\varsigma^2) = \int_{0}^{\infty} \d r \,\,\, \frac{1}{\sqrt{2\pi(1+\varsigma^2)}} e^{-\frac{(r+1)^2}{2(1+\varsigma^2)}}.
\end{align}
Note that the inequalities $\rhom^\star(0) < \rhom^\star(\varsigma^2) < 1/2$ always hold, although 
\begin{subequations}
\begin{align}
\lim_{\varsigma^2 \to 0} \rhom^\star(\varsigma^2) &= \rhom^\star(0), \label{eq:lim1} \\
\lim_{\varsigma^2\to \infty} \rhom^\star(\varsigma^2) &= 1/2. \label{eq:lim2}
\end{align}
\end{subequations}
\end{example}

\begin{remark}
Equations \eqref{eq:lim1} and \eqref{eq:lim2} tell us that $\varsigma$ competes with the scale of the PDFs and/or their supports to determine the extent to which noise impacts $\rhom$.  The first case states the sufficiently small noise has essentially no impact.  The second case states that large noise reverts a classifier to the limit of its degeneracy, so that property (P3) is no longer valid.  This example illustrates the intuitive idea that too much noise renders a classifier meaningless.  Note that Eq.\ \eqref{eq:varinequality} also diverges in this case.  
\end{remark}

\subsection{Gershgorin Radius as an Objective Function for Assay Optimization}

Equations \eqref{eq:ebound} and \eqref{eq:varinequality}: (I) are uniform bounds on the classification error and prevalence uncertainty; and (II) only depend on the assay through the Gershgorin radius $\rhom$.  These features  imply that $\rhom$ is a valid objective function whose minimum quantifies uncertainties of interest.  Given a choice among assays (or more generally, input spaces $\Gamma$), we can identify the ``best'' one as that whose minimum value of $\rhom$ is smallest.

The utility of inequality \eqref{eq:varinequality} is rooted in our ability to numerically solve this optimization problem.  Section \ref{sec:minrho} accomplishes this for binary classification under the assumption that the $p_j(\br)$ are known. In practice, we are only given empirical data drawn from these distributions.  In such cases, statistical modeling of the conditional PDFs may be prohibitive if the points $\br$ are high-dimensional and/or there is insufficient data to account for correlations between different dimensions. 

Several empirical methods exist to address this problem.  Clustering algorithms can be used to approximate classification domains, for which we can define the empirical estimates
\begin{align}
P_{j,k} \approx \tilde P_{j,k} = \frac{1}{m_k}\sum_{i=1}^{m_k}\mathbb I(\br_{i,k}\in D_j)
\end{align}
where $\br_{i,k}$ is the $i$th point from the $k$th class, $m_k$ is the number of training samples in class $k$, and $D_j$ is the domain associated with the $j$th cluster.  More recently, in a binary setting, we showed that efficient homotopy methods can be used to minimize the misclassification rate of empirical training data without needing to assume the full conditional PDFs \cite{PartI}.  However, we acknowledge that efficient optimization of $\rhom$ remains an interesting problem.  See Refs.\ \cite{Patrone21,Patrone22_1,RW,SmithUQ} for further discussion of such issues.

\subsection{Additional Open Directions}

Uniform bounds on assay performance are useful in settings where it is difficult to predict how prevalence will change.    However, slowly spreading diseases may settle into an endemic state characterized by a time-invariant prevalence.  In such cases, uniform bounds may overestimate the uncertainty associated with a diagnostic assay, leading to the need to determine whether tighter, prevalence-dependent bounds on uncertainty can be deduced, especially in a multi-class setting.  

%\subsection{On Related Estimators of Prevalence}
%Discuss non-stochastic matrix estimations of prevalence, why similar ideas don't work for classification

{\it Acknowledgements:} This work is a contribution of the National Institutes of Standards and Technology and is therefore not subject to copyright in the United States.  We thank Dr.\ A.\ Krishnaswamy-Usha and Dr.\ M.\ Donahue for helpful discussion during preparation of this manuscript.  

%{\it Use of all data deriving from human subjects was approved by the NIST Research Protections Office.}

\bibliographystyle{elsarticle-num}
\bibliography{Assay_Inequalities}

\begin{thebibliography}{10}
\expandafter\ifx\csname url\endcsname\relax
  \def\url#1{\texttt{#1}}\fi
\expandafter\ifx\csname urlprefix\endcsname\relax\def\urlprefix{URL }\fi
\expandafter\ifx\csname href\endcsname\relax
  \def\href#1#2{#2} \def\path#1{#1}\fi

\bibitem{assaynumber}
G.~Liu, J.~F. Rusling, Covid-19 antibody tests and their limitations, ACS
  Sensors 6~(3) (2021) 593--612.

\bibitem{EUA}
FDA, Eua authorized serology test performance,
  {https://www.fda.gov/medical-devices/coronavirus-disease-2019-covid-19-emergency-use-authorizations-medical-devices/eua-authorized-serology-test-performance},
  accessed: 2020-09-16 (2020).

\bibitem{revoked}
R.~Colgrove, L.~A. Bruno-Murtha, C.~A. Chastain, K.~E. Hanson, F.~Lee, A.~R.
  Odom~John, R.~Humphries, {Tale of the Titers: Serologic Testing for
  SARS-CoV-2—Yes, No, and Maybe, With Clinical Examples From the IDSA
  Diagnostics Committee}, Open Forum Infectious Diseases 10~(1) (12 2022).

\bibitem{controversial}
E.~Bendavid, B.~Mulaney, N.~Sood, S.~Shah, R.~Bromley-Dulfano, C.~Lai,
  Z.~Weissberg, R.~Saavedra-Walker, J.~Tedrow, A.~Bogan, T.~Kupiec, D.~Eichner,
  R.~Gupta, J.~P.~A. Ioannidis, J.~Bhattacharya, {COVID-19 antibody
  seroprevalence in Santa Clara County, California}, International Journal of
  Epidemiology 50~(2) (2021) 410--419.

\bibitem{flawed}
J.~Abbasi, {The Flawed Science of Antibody Testing for SARS-CoV-2 Immunity},
  JAMA 326~(18) (2021) 1781--1782.
\newblock \href {https://doi.org/10.1001/jama.2021.18919}
  {\path{doi:10.1001/jama.2021.18919}}.

\bibitem{Medstat1}
R.~Dorfman, \href{http://www.jstor.org/stable/2235930}{The detection of
  defective members of large populations}, The Annals of Mathematical
  Statistics 14~(4) (1943) 436--440.
\newline\urlprefix\url{http://www.jstor.org/stable/2235930}

\bibitem{Lewis12}
F.~I. Lewis, P.~R. Torgerson, \href{https://doi.org/10.1186/1742-7622-9-9}{A
  tutorial in estimating the prevalence of disease in humans and animals in the
  absence of a gold standard diagnostic}, Emerging Themes in Epidemiology 9~(1)
  (2012) 9.
\newblock \href {https://doi.org/10.1186/1742-7622-9-9}
  {\path{doi:10.1186/1742-7622-9-9}}.
\newline\urlprefix\url{https://doi.org/10.1186/1742-7622-9-9}

\bibitem{Lew89}
R.~A. Lew, P.~S. Levy,
  \href{https://onlinelibrary.wiley.com/doi/abs/10.1002/sim.4780081006}{Estimation
  of prevalence on the basis of screening tests}, Statistics in Medicine 8~(10)
  (1989) 1225--1230.
\newblock \href
  {http://arxiv.org/abs/https://onlinelibrary.wiley.com/doi/pdf/10.1002/sim.4780081006}
  {\path{arXiv:https://onlinelibrary.wiley.com/doi/pdf/10.1002/sim.4780081006}},
  \href {https://doi.org/https://doi.org/10.1002/sim.4780081006}
  {\path{doi:https://doi.org/10.1002/sim.4780081006}}.
\newline\urlprefix\url{https://onlinelibrary.wiley.com/doi/abs/10.1002/sim.4780081006}

\bibitem{Lang14}
Z.~Lang, J.~Reiczigel,
  \href{https://www.sciencedirect.com/science/article/pii/S0167587713002936}{Confidence
  limits for prevalence of disease adjusted for estimated sensitivity and
  specificity}, Preventive Veterinary Medicine 113~(1) (2014) 13--22.
\newblock \href
  {https://doi.org/https://doi.org/10.1016/j.prevetmed.2013.09.015}
  {\path{doi:https://doi.org/10.1016/j.prevetmed.2013.09.015}}.
\newline\urlprefix\url{https://www.sciencedirect.com/science/article/pii/S0167587713002936}

\bibitem{OldPrevOpt}
C.~Brownie, J.-P. Habicht, \href{http://www.jstor.org/stable/2530910}{Selecting
  a screening cut-off point or diagnostic criterion for comparing prevalences
  of disease}, Biometrics 40~(3) (1984) 675--684.
\newline\urlprefix\url{http://www.jstor.org/stable/2530910}

\bibitem{Qiu19}
Z.~Qiu, L.~Peng, A.~Manatunga, Y.~Guo,
  \href{https://www.sciencedirect.com/science/article/pii/S0167947318302779}{A
  smooth nonparametric approach to determining cut-points of a continuous
  scale}, Computational Statistics \& Data Analysis 134 (2019) 186--210.
\newblock \href {https://doi.org/https://doi.org/10.1016/j.csda.2018.11.001}
  {\path{doi:https://doi.org/10.1016/j.csda.2018.11.001}}.
\newline\urlprefix\url{https://www.sciencedirect.com/science/article/pii/S0167947318302779}

\bibitem{Patrone21}
P.~N. Patrone, A.~J. Kearsley, Classification under uncertainty: data analysis
  for diagnostic antibody testing., Mathematical medicine and biology : a
  journal of the IMA (2021).

\bibitem{Patrone22}
P.~N. Patrone, P.~Bedekar, N.~Pisanic, Y.~C. Manabe, D.~L. Thomas, C.~D.
  Heaney, A.~J. Kearsley,
  \href{https://www.sciencedirect.com/science/article/pii/S0025556422000608}{Optimal
  decision theory for diagnostic testing: Minimizing indeterminate classes with
  applications to saliva-based sars-cov-2 antibody assays}, Mathematical
  Biosciences 351 (2022) 108858.
\newblock \href {https://doi.org/https://doi.org/10.1016/j.mbs.2022.108858}
  {\path{doi:https://doi.org/10.1016/j.mbs.2022.108858}}.
\newline\urlprefix\url{https://www.sciencedirect.com/science/article/pii/S0025556422000608}

\bibitem{Patrone22_1}
P.~Patrone, A.~Kearsley, \href{https://arxiv.org/abs/2203.12792}{Minimizing
  uncertainty in prevalence estimates} (2022).
\newblock \href {https://doi.org/10.48550/ARXIV.2203.12792}
  {\path{doi:10.48550/ARXIV.2203.12792}}.
\newline\urlprefix\url{https://arxiv.org/abs/2203.12792}

\bibitem{Luke22}
R.~A. Luke, A.~J. Kearsley, N.~Pisanic, Y.~C. Manabe, D.~L. Thomas, C.~D.
  Heaney, P.~N. Patrone, \href{https://arxiv.org/abs/2206.14316}{Modeling in
  higher dimensions to improve diagnostic testing accuracy: theory and examples
  for multiplex saliva-based sars-cov-2 antibody assays} (2022).
\newblock \href {https://doi.org/10.48550/ARXIV.2206.14316}
  {\path{doi:10.48550/ARXIV.2206.14316}}.
\newline\urlprefix\url{https://arxiv.org/abs/2206.14316}

\bibitem{Luke23}
R.~A. Luke, A.~J. Kearsley, P.~N. Patrone,
  \href{https://www.sciencedirect.com/science/article/pii/S0025556423000238}{Optimal
  classification and generalized prevalence estimates for diagnostic settings
  with more than two classes}, Mathematical Biosciences 358 (2023) 108982.
\newblock \href {https://doi.org/https://doi.org/10.1016/j.mbs.2023.108982}
  {\path{doi:https://doi.org/10.1016/j.mbs.2023.108982}}.
\newline\urlprefix\url{https://www.sciencedirect.com/science/article/pii/S0025556423000238}

\bibitem{Bedekar22}
P.~Bedekar, A.~J. Kearsley, P.~N. Patrone,
  \href{https://www.sciencedirect.com/science/article/pii/S0022519322003666}{Prevalence
  estimation and optimal classification methods to account for time dependence
  in antibody levels}, Journal of Theoretical Biology 559 (2023) 111375.
\newblock \href {https://doi.org/https://doi.org/10.1016/j.jtbi.2022.111375}
  {\path{doi:https://doi.org/10.1016/j.jtbi.2022.111375}}.
\newline\urlprefix\url{https://www.sciencedirect.com/science/article/pii/S0022519322003666}

\bibitem{PartI}
P.~N. Patrone, R.~A. Binder, C.~S. Forconi, A.~M. Moormann, A.~J. Kearsley,
  \href{https://arxiv.org/abs/2309.00645}{Analysis of diagnostics (part i):
  Prevalence, uncertainty quantification, and machine learning} (2024).
\newblock \href {http://arxiv.org/abs/2309.00645} {\path{arXiv:2309.00645}}.
\newline\urlprefix\url{https://arxiv.org/abs/2309.00645}

\bibitem{change}
M.~Lisboa~Bastos, G.~Tavaziva, S.~K. Abidi, J.~R. Campbell, L.-P. Haraoui,
  J.~C. Johnston, Z.~Lan, S.~Law, E.~MacLean, A.~Trajman, D.~Menzies,
  A.~Benedetti, F.~Ahmad~Khan, Diagnostic accuracy of serological tests for
  covid-19: systematic review and meta-analysis, BMJ 370 (2020).
\newblock \href {https://doi.org/10.1136/bmj.m2516}
  {\path{doi:10.1136/bmj.m2516}}.

\bibitem{Metrics1}
K.~Dembczy{\'{n}}ski, W.~Kot{\l}owski, O.~Koyejo, N.~Natarajan,
  \href{https://proceedings.mlr.press/v70/dembczynski17a.html}{Consistency
  analysis for binary classification revisited}, in: D.~Precup, Y.~W. Teh
  (Eds.), Proceedings of the 34th International Conference on Machine Learning,
  Vol.~70 of Proceedings of Machine Learning Research, PMLR, 2017, pp.
  961--969.
\newline\urlprefix\url{https://proceedings.mlr.press/v70/dembczynski17a.html}

\bibitem{Metrics2}
H.~Narasimhan, H.~Ramaswamy, A.~Saha, S.~Agarwal,
  \href{https://proceedings.mlr.press/v37/narasimhanb15.html}{Consistent
  multiclass algorithms for complex performance measures}, in: F.~Bach, D.~Blei
  (Eds.), Proceedings of the 32nd International Conference on Machine Learning,
  Vol.~37 of Proceedings of Machine Learning Research, PMLR, Lille, France,
  2015, pp. 2398--2407.
\newline\urlprefix\url{https://proceedings.mlr.press/v37/narasimhanb15.html}

\bibitem{confusion}
P.~Machart, L.~Ralaivola, Confusion matrix stability bounds for multiclass
  classification (2012).
\newblock \href {http://arxiv.org/abs/1202.6221} {\path{arXiv:1202.6221}}.

\bibitem{Imbalanced}
A.~Menon, H.~Narasimhan, S.~Agarwal, S.~Chawla,
  \href{https://proceedings.mlr.press/v28/menon13a.html}{On the statistical
  consistency of algorithms for binary classification under class imbalance},
  in: S.~Dasgupta, D.~McAllester (Eds.), Proceedings of the 30th International
  Conference on Machine Learning, Vol.~28 of Proceedings of Machine Learning
  Research, PMLR, Atlanta, Georgia, USA, 2013, pp. 603--611.
\newline\urlprefix\url{https://proceedings.mlr.press/v28/menon13a.html}

\bibitem{Imbalanced2}
M.~Steininger, K.~Kobs, P.~Davidson, A.~Krause, A.~Hotho,
  \href{https://doi.org/10.1007/s10994-021-06023-5}{Density-based weighting for
  imbalanced regression}, Machine Learning 110~(8) (2021) 2187--2211.
\newblock \href {https://doi.org/10.1007/s10994-021-06023-5}
  {\path{doi:10.1007/s10994-021-06023-5}}.
\newline\urlprefix\url{https://doi.org/10.1007/s10994-021-06023-5}

\bibitem{Metrics3}
O.~Koyejo, N.~Natarajan, P.~Ravikumar, I.~S. Dhillon,
  \href{https://api.semanticscholar.org/CorpusID:6680925}{Consistent binary
  classification with generalized performance metrics}, in: Neural Information
  Processing Systems, 2014.
\newline\urlprefix\url{https://api.semanticscholar.org/CorpusID:6680925}

\bibitem{Metrics4}
O.~Koyejo, N.~Natarajan, P.~Ravikumar, I.~S. Dhillon,
  \href{https://api.semanticscholar.org/CorpusID:6348016}{Consistent multilabel
  classification}, in: Neural Information Processing Systems, 2015.
\newline\urlprefix\url{https://api.semanticscholar.org/CorpusID:6348016}

\bibitem{Metrics5}
H.~Narasimhan, P.~Kar, P.~Jain,
  \href{https://api.semanticscholar.org/CorpusID:1424505}{Optimizing
  non-decomposable performance measures: A tale of two classes}, in:
  International Conference on Machine Learning, 2015.
\newline\urlprefix\url{https://api.semanticscholar.org/CorpusID:1424505}

\bibitem{Metrics6}
A.~S. Jadhav,
  \href{https://www.sciencedirect.com/science/article/pii/S0957417420302153}{A
  novel weighted tpr-tnr measure to assess performance of the classifiers},
  Expert Systems with Applications 152 (2020) 113391.
\newblock \href {https://doi.org/https://doi.org/10.1016/j.eswa.2020.113391}
  {\path{doi:https://doi.org/10.1016/j.eswa.2020.113391}}.
\newline\urlprefix\url{https://www.sciencedirect.com/science/article/pii/S0957417420302153}

\bibitem{multi}
M.~Evans, N.~Hastings, B.~Peacock, C.~Forbes, Statistical Distributions, Wiley,
  2011.

\bibitem{pspace}
D.~Stroock, Probability Theory: An Analytic View, Cambridge University Press,
  2010.

\bibitem{dim1}
A.~Hachim, N.~Kavian, C.~A. Cohen, A.~W.~H. Chin, D.~K.~W. Chu, C.~K.~P. Mok,
  O.~T.~Y. Tsang, Y.~C. Yeung, R.~A. P.~M. Perera, L.~L.~M. Poon, J.~S.~M.
  Peiris, S.~A. Valkenburg, Orf8 and orf3b antibodies are accurate serological
  markers of early and late sars-cov-2 infection, Nature Immunology 21~(10)
  (2020) 1293--1301.
\newblock \href {https://doi.org/10.1038/s41590-020-0773-7}
  {\path{doi:10.1038/s41590-020-0773-7}}.

\bibitem{dim2}
L.~Grzelak, S.~Temmam, C.~Planchais, C.~Demeret, L.~Tondeur, C.~Huon,
  F.~Guivel-Benhassine, I.~Staropoli, M.~Chazal, J.~Dufloo, D.~Planas,
  J.~Buchrieser, M.~M. Rajah, R.~Robinot, F.~Porrot, M.~Albert, K.-Y. Chen,
  B.~Crescenzo-Chaigne, F.~Donati, F.~Anna, P.~Souque, M.~Gransagne,
  J.~Bellalou, M.~Nowakowski, M.~Backovic, L.~Bouadma, L.~Le~Fevre,
  Q.~Le~Hingrat, D.~Descamps, A.~Pourbaix, C.~Laou{\'e}nan, J.~Ghosn,
  Y.~Yazdanpanah, C.~Besombes, N.~Jolly, S.~Pellerin-Fernandes, O.~Cheny, M.-N.
  Ungeheuer, G.~Mellon, P.~Morel, S.~Rolland, F.~A. Rey, S.~Behillil, V.~Enouf,
  A.~Lemaitre, M.-A. Cr{\'e}ach, S.~Petres, N.~Escriou, P.~Charneau,
  A.~Fontanet, B.~Hoen, T.~Bruel, M.~Eloit, H.~Mouquet, O.~Schwartz, S.~van~der
  Werf, A comparison of four serological assays for detecting
  anti{\textendash}sars-cov-2 antibodies in human serum samples from different
  populations 12~(559) (2020).
\newblock \href {https://doi.org/10.1126/scitranslmed.abc3103}
  {\path{doi:10.1126/scitranslmed.abc3103}}.

\bibitem{dim3}
A.~Algaissi, M.~A. Alfaleh, S.~Hala, T.~S. Abujamel, S.~S. Alamri, S.~A.
  Almahboub, K.~A. Alluhaybi, H.~I. Hobani, R.~M. Alsulaiman, R.~H. AlHarbi,
  M.-Z. ElAssouli, R.~Y. Alhabbab, A.~A. AlSaieedi, W.~H. Abdulaal, A.~A.
  Al-Somali, F.~S. Alofi, A.~A. Khogeer, A.~A. Alkayyal, A.~B. Mahmoud,
  N.~A.~M. Almontashiri, A.~Pain, A.~M. Hashem, Sars-cov-2 s1 and n-based
  serological assays reveal rapid seroconversion and induction of specific
  antibody response in covid-19 patients, Scientific Reports 10~(1) (2020)
  16561.
\newblock \href {https://doi.org/10.1038/s41598-020-73491-5}
  {\path{doi:10.1038/s41598-020-73491-5}}.

\bibitem{dim4}
R.~A. Binder, G.~F. Fujimori, C.~S. Forconi, G.~W. Reed, L.~S. Silva, P.~S.
  Lakshmi, A.~Higgins, L.~Cincotta, P.~Dutta, M.-C. Salive, V.~Mangolds,
  O.~Anya, J.~M. Calvo~Calle, T.~Nixon, Q.~Tang, M.~Wessolossky, Y.~Wang, D.~A.
  Ritacco, C.~S. Bly, S.~Fischinger, C.~Atyeo, P.~O. Oluoch, B.~Odwar, J.~A.
  Bailey, A.~Maldonado-Contreras, J.~P. Haran, A.~G. Schmidt, L.~Cavacini,
  G.~Alter, A.~M. Moormann, {SARS-CoV-2 Serosurveys: How Antigen, Isotype and
  Threshold Choices Affect the Outcome}, The Journal of Infectious Diseases
  227~(3) (2022) 371--380.

\bibitem{dim5}
N.~Pisanic, P.~R. Randad, K.~Kruczynski, Y.~C. Manabe, D.~L. Thomas, A.~Pekosz,
  S.~L. Klein, M.~J. Betenbaugh, W.~A. Clarke, O.~Laeyendecker, P.~P.
  Caturegli, H.~B. Larman, B.~Detrick, J.~K. Fairley, A.~C. Sherman,
  N.~Rouphael, S.~Edupuganti, D.~A. Granger, S.~W. Granger, M.~H. Collins,
  C.~D. Heaney, M.~J. Loeffelholz, Covid-19 serology at population scale:
  Sars-cov-2-specific antibody responses in saliva, Journal of Clinical
  Microbiology 59~(1) (2020) e02204--20.
\newblock \href {https://doi.org/10.1128/JCM.02204-20}
  {\path{doi:10.1128/JCM.02204-20}}.

\bibitem{dim6}
P.~R. Randad, N.~Pisanic, K.~Kruczynski, T.~Howard, M.~G. Rivera, K.~Spicer,
  A.~A. Antar, T.~Penson, D.~L. Thomas, A.~Pekosz, N.~Ndahiro, L.~Aliyu, M.~J.
  Betenbaugh, H.~Manley, B.~Detrick, M.~Katz, S.~Cosgrove, C.~Rock, I.~Zyskind,
  J.~I. Silverberg, A.~Z. Rosenberg, P.~Duggal, Y.~C. Manabe, M.~H. Collins,
  C.~D. Heaney, Durability of sars-cov-2-specific igg responses in saliva for
  up to 8 months after infection (2021).

\bibitem{dim7}
C.~D. Heaney, N.~Pisanic, P.~R. Randad, K.~Kruczynski, T.~Howard, X.~Zhu,
  K.~Littlefield, E.~U. Patel, R.~Shrestha, O.~Laeyendecker, S.~Shoham,
  D.~Sullivan, K.~Gebo, D.~Hanley, A.~D. Redd, T.~C. Quinn, A.~Casadevall,
  J.~M. Zenilman, A.~Pekosz, E.~M. Bloch, A.~A. Tobian, Comparative performance
  of multiplex salivary and commercially available serologic assays to detect
  sars-cov-2 igg and neutralization titers, Journal of Clinical Virology 145
  (2021) 104997.

\bibitem{multiset}
J.~Hein, \href{https://books.google.com/books?id=qLNfpb1hBWwC}{Discrete
  Mathematics}, Discrete Mathematics and Logic Series, Jones and Bartlett
  Publishers, 2003.
\newline\urlprefix\url{https://books.google.com/books?id=qLNfpb1hBWwC}

\bibitem{Zhang}
T.~Zhang, \href{http://www.jstor.org/stable/3448494}{Statistical behavior and
  consistency of classification methods based on convex risk minimization}, The
  Annals of Statistics 32~(1) (2004) 56--85.
\newline\urlprefix\url{http://www.jstor.org/stable/3448494}

\bibitem{RW}
C.~Rasmussen, C.~Williams, Gaussian Processes for Machine Learning, Adaptative
  computation and machine learning series, University Press Group Limited,
  2006.

\bibitem{Tao}
T.~Tao, An Introduction to Measure Theory, Graduate studies in mathematics,
  American Mathematical Society, 2013.

\bibitem{Gershgorin1}
S.~A. Gershgorin, {\"U}ber die {Abgrenzung} der {Eigenwerte} einer {Matrix}.,
  Bull. Acad. Sci. URSS 1931~(6) (1931) 749--754.

\bibitem{Gershgorin2}
R.~A. Horn, C.~R. Johnson, Matrix Analysis, 2nd Edition, Cambridge University
  Press, 2012.
\newblock \href {https://doi.org/10.1017/CBO9781139020411}
  {\path{doi:10.1017/CBO9781139020411}}.

\bibitem{StochasticMatrix1}
P.~Gagniuc, Markov Chains: From Theory to Implementation and Experimentation,
  2017.
\newblock \href {https://doi.org/10.1002/9781119387596}
  {\path{doi:10.1002/9781119387596}}.

\bibitem{grad}
D.~Zwillinger, A.~Jeffrey, Table of Integrals, Series, and Products, Elsevier
  Science, 2000.

\bibitem{Ideal}
E.~Versi, "gold standard" is an appropriate term., BMJ 305~(6846) (1992)
  187--187.
\newblock \href {https://doi.org/10.1136/bmj.305.6846.187-b}
  {\path{doi:10.1136/bmj.305.6846.187-b}}.

\bibitem{montecarlo}
R.~E. Caflisch, Monte carlo and quasi-monte carlo methods, Acta Numerica 7
  (1998) 1–49.
\newblock \href {https://doi.org/10.1017/S0962492900002804}
  {\path{doi:10.1017/S0962492900002804}}.

\bibitem{Totvar}
S.~M. Ross, 3 - conditional probability and conditional expectation, in: S.~M.
  Ross (Ed.), Introduction to Probability Models (Thirteenth Edition),
  thirteenth edition Edition, Academic Press, 2024, pp. 103--200.
\newblock \href
  {https://doi.org/https://doi.org/10.1016/B978-0-44-318761-2.00008-7}
  {\path{doi:https://doi.org/10.1016/B978-0-44-318761-2.00008-7}}.

\bibitem{Krey}
E.~Kreyszig, Introductory Functional Analysis with Applications, Wiley Classics
  Library, Wiley, 1991.

\bibitem{bathtub}
E.~Lieb, M.~Loss, A.~M. Society, Analysis, Crm Proceedings \& Lecture Notes,
  American Mathematical Society, 2001.

\bibitem{toeplitz}
A.~B{\"o}ttcher, S.~Grudsky,
  \href{https://books.google.com/books?id=Dmr0BwAAQBAJ}{Toeplitz Matrices,
  Asymptotic Linear Algebra, and Functional Analysis}, Birkh{\"a}user Basel,
  2012.
\newline\urlprefix\url{https://books.google.com/books?id=Dmr0BwAAQBAJ}

\bibitem{PM}
C.~Miranda, Un'osservazione su un teorema di Brouwer, Pubblicazioni (Istituto
  per le applicazioni del calcolo), Consiglio Nazionale delle Ricerche, 1940.

\bibitem{Noisy_labels}
N.~Natarajan, I.~S. Dhillon, P.~Ravikumar, A.~Tewari,
  \href{http://jmlr.org/papers/v18/15-226.html}{Cost-sensitive learning with
  noisy labels}, Journal of Machine Learning Research 18~(155) (2018) 1--33.
\newline\urlprefix\url{http://jmlr.org/papers/v18/15-226.html}

\bibitem{Divisible1}
F.~Steutel,
  \href{https://www.sciencedirect.com/science/article/pii/0304414973900082}{Some
  recent results in infinite divisibility}, Stochastic Processes and their
  Applications 1~(2) (1973) 125--143.
\newblock \href {https://doi.org/https://doi.org/10.1016/0304-4149(73)90008-2}
  {\path{doi:https://doi.org/10.1016/0304-4149(73)90008-2}}.
\newline\urlprefix\url{https://www.sciencedirect.com/science/article/pii/0304414973900082}

\bibitem{Divisible2}
A.~Bose, A.~Dasgupta, H.~Rubin, \href{http://www.jstor.org/stable/25051430}{A
  contemporary review and bibliography of infinitely divisible distributions
  and processes}, Sankhyā: The Indian Journal of Statistics, Series A
  (1961-2002) 64~(3) (2002) 763--819.
\newline\urlprefix\url{http://www.jstor.org/stable/25051430}

\bibitem{SmithUQ}
R.~Smith, \href{https://books.google.com/books?id=4c1GAgAAQBAJ}{Uncertainty
  Quantification: Theory, Implementation, and Applications}, Computational
  Science and Engineering, Society for Industrial and Applied Mathematics,
  2013.
\newline\urlprefix\url{https://books.google.com/books?id=4c1GAgAAQBAJ}

\end{thebibliography}

\end{document}